\renewcommand{\phi}{\varphi}
\renewcommand{\P}{\mathbb{P}}
\newcommand{\E}{\mathbb{E}}
\newcommand{\R}{\mathbb{R}}
\newcommand{\cK}{\mathcal{K}}
\newcommand{\cN}{\mathcal{N}}
\newcommand{\cL}{\mathcal{L}}
\def\ds1{\mathds{1}}
\renewcommand{\epsilon}{\varepsilon}
\newcommand{\argmin}{\mathop{\mathrm{argmin}}}
\renewcommand{\tilde}{\widetilde}
\newlength{\minipagewidth}
\newcommand{\beq}{\begin{equation}}
\newcommand{\eeq}{\end{equation}}
\newcommand{\beqa}{\begin{eqnarray}}
\newcommand{\eeqa}{\end{eqnarray}}
\newcommand{\beqan}{\begin{eqnarray*}}
\newcommand{\eeqan}{\end{eqnarray*}}
\def\ba#1\ea{\begin{align*}#1\end{align*}} 
\def\banum#1\eanum{\begin{align}#1\end{align}} 
\newtheorem{theorem}{Theorem}
\newtheorem{lemma}{Lemma}
\newcommand{\BlackBox}{\rule{1.5ex}{1.5ex}}  
\newenvironment{proof}{\par\noindent{\bf Proof\ }}{\hfill\BlackBox\\[2mm]}
\begin{document}

\title{Sparsity, variance and curvature in multi-armed bandits}

\author{S\'ebastien Bubeck \\
Microsoft Research
\and Michael B. Cohen \thanks{This work was done while M. B. Cohen and Y. Li were at Microsoft Research.} \\
MIT
\and 
Yuanzhi Li \footnotemark[1]\\
Princeton University}
\date{\today}

\maketitle

\abstract{In (online) learning theory the concepts of sparsity, variance and curvature are well-understood and are routinely used to obtain refined regret and generalization bounds. In this paper we further our understanding of these concepts in the more challenging limited feedback scenario. We consider the adversarial multi-armed bandit and linear bandit settings and solve several open problems pertaining to the existence of algorithms with favorable regret bounds under the following assumptions: (i) sparsity of the individual losses, (ii) small variation of the loss sequence, and (iii) curvature of the action set. Specifically we show that (i) for $s$-sparse losses one can obtain $\tilde{O}(\sqrt{s T})$-regret (solving an open problem by Kwon and Perchet), (ii) for loss sequences with variation bounded by $Q$ one can obtain $\tilde{O}(\sqrt{Q})$-regret (solving an open problem by Kale and Hazan), and (iii) for linear bandit on an $\ell_p^n$ ball one can obtain $\tilde{O}(\sqrt{n T})$-regret for $p \in [1,2]$ and one has $\tilde{\Omega}(n \sqrt{T})$-regret for $p>2$ (solving an open problem by Bubeck, Cesa-Bianchi and Kakade). A key new insight to obtain these results is to use regularizers satisfying more refined conditions than general self-concordance.}

\section{Introduction}
In this paper we resolve several open problems in multi-armed bandit theory. Let us first recall the general setting of bandit linear optimization on a compact set $\cK \subset \R^n$ (the classical multi-armed bandit problem corresponds to $\cK=\{e_1,\hdots,e_n\}$, the canonical basis in $\R^n$). It can be described as the following sequential game: at each time step $t=1, \hdots, T$, a player selects an action $a_t \in \cK$, and simultaneously an adversary selects a linear loss function $\ell_t : \cK \rightarrow [-1,1]$. The player's feedback is its suffered loss, $\ell_t(a_t)$. Equivalently we will view the loss function $\ell_t$ as a vector in the polar body $\cK^{\circ} := \{h : \forall x \in \cK, |h \cdot x| \leq 1\}$, and thus we write $\ell_t(x) = \ell_t \cdot x$. The player has access to external randomness, and can select her action $a_t$ based on the history $H_t=(a_s, \ell_s(a_s))_{s<t}$. The player's perfomance at the end of the game is measured through the 
{\em pseudo-regret} (the expectation is with respect to the randomness in her strategy) :
\begin{equation} \label{eq:regret}
R_T = \E \sum_{t=1}^T \ell_t(a_t) - \min_{x \in \cK}\E \sum_{t=1}^T \ell_t(x) ,
\end{equation}
which compares her cumulative loss to the smallest cumulative loss she could have obtained had she known the sequence of loss functions. We refer to \cite{BC12} for the history of this problem, and we simply mention that the minimax rate for the regret is known to be $\tilde{\Theta}(n \sqrt{T})$ without further assumptions on $\cK$, and for the special case where $\cK=\{e_1,\hdots,e_n\}$ (i.e., the multi-armed bandit problem) it is $\Theta(\sqrt{n T})$. 

We consider three basic open problems in bandit theory (description below), each one part of a more general trend in learning theory/online learning, namely (i) exploiting sparsity, (ii) faster learning for ``easy data", and (iii) interplay between curvature and learning\footnote{Note that the terms sparsity and curvature in the paper's title apply respectively to the losses and the action set. They could also apply respectively to the action set and to the losses, see e.g. \cite{LLZ09} and \cite{HL14}. We do not consider these (very different) settings here.}. In fact these problems are possibly the easiest at the intersection of bandit theory and topics (i), (ii), (iii). Thus, given the flurry of activity on these topics and on bandit theory in recent years, we believe that they epitomize the difficulty of adapting full information tools to limited feedback scenarios. In particular we hope that the tools we develop to resolve these problems will find broader applicability.
\newline

\noindent
\textbf{Sparse multi-armed bandit, \cite{KP16}.} Consider the multi-armed bandit problem with the additional assumption that at each time step $t \in [T]$ the loss vector $\ell_t \in [-1,1]^n$ only has $s$ non-zero entries. Trivially the best regret one can hope for in this setting is $\Omega(\sqrt{s T})$. Kwon and Perchet ask whether there is a strategy with regret matching this lower bound (possibly up to logarithmic factors). Surprisingly the state of the art for this problem is the standard $O(\sqrt{n T})$ bound, or in other words prior to this present work it was not known whether sparsity of the losses can be exploited in a bandit setting\footnote{We note however that for {\em non-negative} losses (which should intuitively be a much easier case than say sparse {\em non-positive} losses, a.k.a. sparse gains), Kwon and Perchet already answered positively the question, see Section \ref{sec:obstacles}.}.
\newline

\noindent
\textbf{Small variation bound for multi-armed bandit, \cite{HK09}.} Consider again the multi-armed bandit problem with the additional assumption that the loss sequence $(\ell_1, \hdots, \ell_T) \in ([-1,1]^n)^T$ has a {\em small variation} $Q:= \sum_{t=1}^T \|\ell_t - \frac1{T} \sum_{s=1}^T \ell_s\|_2^2$ (note that $Q \leq n T$). The COLT 2011 open problem by Hazan and Kale ask whether there exists a strategy with regret $\tilde{O}(\sqrt{Q})$ (\cite{HK11}). The current state of the art remains \cite{HK09} which gives a strategy with regret $\tilde{O}(n^2 \sqrt{Q})$. We also note that \cite{GL16} showed that for any fixed $Q > \log(T)$ one cannot obtain a regret smaller than $\Omega(\sqrt{Q})$ for all sequences with variation $Q$.
\newline

\noindent
\textbf{Linear bandit on $\ell_p^n$ balls, \cite{BCK12}.} Consider the linear bandit problem on $\cK= \{x \in \R^n : \|x\|_p \leq 1\}$. The general minimax rate show that for any $p \geq 1$ there exists a strategy with regret $\tilde{O}(n \sqrt{T})$, and furthermore this is optimal for $p=\infty$. It is easy to see that for $p=1$ the problem can be reduced to the classical multi-armed bandit (in dimension $2n$) and thus there exists a strategy with regret $\tilde{O}(\sqrt{n T})$. In \cite{BCK12} it is shown that the latter regret can also be achieved for $p=2$. No other result is known for this problem, and a natural conjecture\footnote{This conjecture was mentioned in talks related to \cite{BCK12}.} would be that $\tilde{O}(\sqrt{n T})$ is achievable for any $p \in [1,2]$, and that the minimax regret then degrades ``smoothly'' for $p>2$ until $\tilde{\Omega}(n \sqrt{T})$ for $p=\infty$.
\newline

We resolve all the above problems, constructing strategies with respective regret bounds $\tilde{O}(\sqrt{s T})$, $\tilde{O}(\sqrt{Q})$, and $\tilde{O}(\sqrt{n T})$ for $p \in [1,2]$. Furthermore we show that in fact for $p>2$ the minimax regret (for large $T$) is $\tilde{\Theta}(n \sqrt{T})$. We also introduce the following more constrained version of bandit linear optimization, which we call {\em starved bandit}. In this model the player only observes feedback if she plays $a_t$ from a {\em fixed} distribution $\mu \in \Delta(\cK)$, where $\mu$ is chosen by the player at the beginning of the game. Thus the player is ``information starved''. One can motivate such a setting in various ways, think for instance of applications where logging information on users is discouraged for privacy reasons. It is easy to see that one must have regret $\Omega(T^{2/3})$ for the starved multi-armed bandit game, and that the same lower bound also applies to starved linear bandit on $\ell_p^n$ unit ball with $p=1$. Perhaps surprisingly we show that $\sqrt{T}$-type regret is achievable for the starved bandit for any $p \in (1,2]$ and {\em not} achievable for any $p>2$.
\newline

A key feature of our work that enables these improved regret bounds is that we avoid resorting to ``global'' smoothness of the regularizers. Slightly more precisely, as we will recall shortly, an important step in the analysis of FTRL (Follow The Regularized Leader) is to show that the regularizer is well-conditioned. Since the groundbreaking work \cite{AHR08} it has been realized that self-concordance (\cite{NN94}) exactly gives such a good conditioning {\em for all directions}. In this paper we use more refined properties of the regularizers, by noticing that one only needs the well-conditioning in directions (and magnitudes) {\em attainable with loss estimators}.
\newline

Next we describe more formally our main results.

\subsection{Main results}
The brief algorithms' description given in the theorem statements below use standard bandit theory terminology which is recalled in Section \ref{sec:reminders}. Note also that in this paper we assume that the parameters of the game (such as the time horizon $T$, or the variation of the loss sequence) are known. Standard methodology (such as the doubling trick, or more sophisticated variants of it) can be used to circumvent this issue.
\newline

We start with a theorem resolving the sparse bandit open problem by Kwon and Perchet (notice that if $\|\ell_t\|_0 \leq s$ and $\|\ell_t\|_{\infty}\leq1$ then $\sum_{t=1}^T \|\ell_t\|_2^2 \leq s T$).
\begin{theorem} \label{th:sparse}
There exists a multi-armed bandit strategy such that for any loss sequence satisfying $\sum_{t=1}^T \|\ell_t\|_2^2 \leq L$ (and $\ell_t \in [-1,1]^n$) one has 
$$R_T \leq 10 \sqrt{L \log(n)} + 20 n \log(T) ~.$$ 

In fact this can be achieved with the FTRL strategy (with standard unbiased loss estimator) with the regularizer $\Phi(x) = \sum_{i=1}^n x(i) \log x(i) - \gamma \sum_{i=1}^n \log x(i)$, learning rate $\eta = \min\left(\frac1{5} \sqrt{\frac{\log(T)}{L}}, \frac{1}{15 n}\right)$, and soft-exploration parameter $\gamma = 2 \eta$. 
\end{theorem}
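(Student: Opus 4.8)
The plan is to run FTRL on the simplex $\Delta_n$ with the stated hybrid regularizer $\Phi = \Phi_{\mathrm{ent}} + \Phi_{\mathrm{bar}}$, where $\Phi_{\mathrm{ent}}(x) = \sum_i x(i)\log x(i)$ is the negative entropy and $\Phi_{\mathrm{bar}}(x) = -\gamma\sum_i \log x(i)$ is a log-barrier, feeding it the standard importance-weighted estimator $\tilde\ell_t(i) = \ell_t(i)\mathbbm{1}[a_t=i]/x_t(i)$, which is conditionally unbiased. Since $x_t$ is $H_t$-measurable, unbiasedness gives $R_T = \E\sum_t \tilde\ell_t\cdot(x_t - x^*)$ for any fixed comparator $x^*$, so it suffices to bound the FTRL regret against $x^*$ in terms of the estimated losses. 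I would then invoke the standard FTRL regret decomposition
\[
\sum_t \tilde\ell_t\cdot(x_t - x^*) \le \frac{\Phi(x^*) - \Phi(x_1)}{\eta} + \eta\sum_t \|\tilde\ell_t\|_{(\nabla^2\Phi(\xi_t))^{-1}}^2,
\]
where $\xi_t$ lies on the segment $[x_t, x_{t+1}]$, valid provided the step is small enough that the local-norm approximation of the Bregman divergence holds.

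The decisive computation is the stability term, and here the diagonal structure of $\Phi$ is what I would exploit. One has $\nabla^2\Phi(x) = \mathrm{diag}\!\big(\tfrac1{x(i)} + \tfrac{\gamma}{x(i)^2}\big)$, so its inverse is $\mathrm{diag}\!\big(\tfrac{x(i)^2}{x(i)+\gamma}\big)$. Plugging in the single-coordinate estimator collapses the local norm to $\|\tilde\ell_t\|_{(\nabla^2\Phi(x_t))^{-1}}^2 = \ell_t(a_t)^2/(x_t(a_t)+\gamma)$, whose conditional expectation is
\[
\sum_i x_t(i)\,\frac{\ell_t(i)^2}{x_t(i)+\gamma} = \sum_i \frac{x_t(i)}{x_t(i)+\gamma}\,\ell_t(i)^2 \le \|\ell_t\|_2^2,
\]
so the stability term is at most $\eta\sum_t\|\ell_t\|_2^2 \le \eta L$ in expectation. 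The same calculation shows that the \emph{dual} local norm of the scaled increment satisfies $\|\eta\tilde\ell_t\|_{(\nabla^2\Phi(x_t))^{-1}}^{*2} = \eta^2\ell_t(a_t)^2/(x_t(a_t)+\gamma) \le \eta^2/\gamma = \eta/2$, using $\gamma=2\eta$ and $|\ell_t(a_t)|\le 1$. This is the crux of the argument: the $+\gamma$ in the denominator, contributed by the log-barrier, bounds the magnitude of the increment in the only direction that matters (the played coordinate) \emph{uniformly in the sign and size of} $\ell_t(a_t)$. This is precisely the refined ``conditioning in attainable directions'' promised in the introduction, and it is what lets the argument go through for signed losses --- exactly the regime where the pure-entropy Exp3 analysis breaks down, since $\tilde\ell_t(a_t)$ can be a large negative spike when $x_t(a_t)$ is tiny.

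With the dual local norm controlled by $\sqrt{\eta/2}$ (and $\eta\le 1/(15n)$ making it genuinely small), a standard self-concordance-type argument bounds the movement $\|x_t - x_{t+1}\|$ and shows $\nabla^2\Phi(\xi_t)$ is comparable to $\nabla^2\Phi(x_t)$ up to an absolute constant, so the intermediate point $\xi_t$ may be replaced by $x_t$ in the display above. I expect this replacement to be the main technical obstacle, since it is where one must argue that the barrier provides the needed local stability without invoking global self-concordance. It then remains to handle the penalty term. Because $\Phi_{\mathrm{bar}}$ blows up at the vertices, I would compare not to $e_j$ but to the shifted point $x^* = (1-\tfrac1T)e_j + \tfrac1{nT}\mathbbm{1}$; the resulting error in the regret is $O(1)$ since the losses lie in $[-1,1]$. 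On this comparator the entropy part contributes $O(\log n/\eta)$ and the barrier part contributes $\tfrac{\gamma}{\eta}\cdot O(n\log(nT)) = O(n\log(nT))$, while $\Phi(x_1)$, evaluated at the interior minimizer of $\Phi$, only helps. Combining everything gives $R_T \lesssim \log n/\eta + \eta L + n\log T$, and the stated choice of $\eta$ balances the first two terms subject to the cap $\eta \le 1/(15n)$, yielding $R_T \le 10\sqrt{L\log n} + 20 n\log T$ after a careful accounting of the constants.
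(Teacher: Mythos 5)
Your overall architecture matches the paper's exactly: FTRL on the simplex with the hybrid regularizer, the standard importance-weighted estimator, the variance bound $\E\|\tilde\ell_t\|_{x_t,*}^2 \le \|\ell_t\|_2^2$ via the diagonal Hessian, a comparator shifted to distance $1/(nT)$ from the boundary, and the penalty accounting $\log n/\eta + (\gamma/\eta)\, O(n\log(nT))$. All of that is correct and is also how the paper proceeds. The problem is that the one step you defer --- replacing $\nabla^2\Phi(\xi_t)$ by $\nabla^2\Phi(x_t)$ on the segment $[x_t,x_{t+1}]$ --- is not a routine appendix item; it is the entire technical content of the paper's proof (Lemma \ref{lem:critical}), and the ``standard self-concordance-type argument'' you invoke does not close with the stated parameters. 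Quantitatively: the hybrid regularizer is separable with $h''(s) = 1/s + \gamma/s^2$ and $|h'''(s)| = 1/s^2 + 2\gamma/s^3 = \Theta(\gamma^{-1/2})\,(h''(s))^{3/2}$, so its self-concordance parameter degrades as $\gamma^{-1/2}$ and the Dikin ball in which the Hessian is provably stable has radius $\Theta(\sqrt{\gamma}) = \Theta(\sqrt{\eta})$. Your own bound $\|\eta\tilde\ell_t\|_{x_t,*} \le \sqrt{\eta/2}$, pushed through the FTRL displacement inequality $\|x_t - x_{t+1}\|_{x_t} \le \tfrac{2}{c}\|\eta\tilde\ell_t\|_{x_t,*}$, puts the step at $\tfrac{1}{c}\sqrt{2\eta}$ --- the same order as the Dikin radius but with a constant that makes the bootstrap $(1 - M\,d)^2 \ge c$ unsatisfiable for any $c \le 1$ when $\gamma = 2\eta$. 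One could repair this by taking $\gamma$ a much larger constant multiple of $\eta$ (which still yields the asymptotic statement, with worse constants), but as written the key inequality is asserted, not proved, and the generic tool you name for proving it fails at the margin where it is needed.

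The paper avoids self-concordance entirely at this step. Its Lemma \ref{lem:critical} exploits the specific structure of the update: the estimated loss perturbs only the single coordinate of the cumulative loss corresponding to the played arm, by an amount $|\xi| \le 1/x_t(a_t)$. Writing the KKT conditions for the two constrained minimizers, it first shows (by differentiating in $\xi$) that the Lagrange multiplier and all unplayed coordinates move monotonically, then uses the barrier term $\gamma(1/x'(1) - 1/x(1))$ to absorb $\eta\xi$ and conclude $x(1)/x'(1) \in [1 - \eta/\gamma,\, 1+\eta/\gamma]$, and finally propagates a bound on $|\lambda - \lambda'|$ to every other coordinate (with a separate case analysis when $x(1) < \gamma - \eta$). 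This gives a multiplicative change of at most $3$ per coordinate, hence $\nabla^2\Phi(x_t) \preceq 9\,\nabla^2\Phi(y_t)$, at $\gamma = 2\eta$. If you want to complete your proof, you need either this KKT argument or a correctly quantified bootstrap with a larger $\gamma/\eta$ ratio; the current sketch has a genuine gap exactly at the point you flagged.
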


The difficulty in achieving a result such as Theorem \ref{th:sparse} is that standard multi-armed bandit algorithms {\em explore too much}. In fact as was noted in \cite{HK11} for the variation bound open problem (the same observation holds for the sparse bound open problem): ``We note that EXP3 itself has $\Omega(\sqrt{T})$ regret, since it mixes with the uniform distribution every iteration to enable sufficient exploration. Hence, the desired algorithm should be a little different from EXP3, incorporating just enough exploration proportional to the variation in the data.'' Our new idea to achieve this is to introduce {\em soft exploration}, by adding to the regularizer a little bit of the log-barrier for the positive orthant. This new hybrid regularizer and its analysis is one of our key contribution. We give detailed intuition for it in Section \ref{sec:intuition}. It also allows to solve the variation bound open problem:

\begin{theorem} \label{th:variance}
There exists a multi-armed bandit strategy and a numerical constant $C>0$ such that for any loss sequence satisfying $\sum_{t=1}^T \|\ell_t - \frac1{T} \sum_{s=1}^T \ell_s\|_2^2 \leq Q$ (and $\ell_t \in [-1,1]^n$) one has 
$$R_T \leq C \sqrt{Q \log(n)} + C n \log^2(T) ~.$$

In fact this can be achieved by combining the Hazan-Kale reservoir sampling idea with the strategy of Theorem \ref{th:sparse}
\end{theorem}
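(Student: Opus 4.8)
The plan is to reduce Theorem~\ref{th:variance} to Theorem~\ref{th:sparse} by replacing the raw loss vectors with centered loss vectors, so that the relevant quantity $L$ in Theorem~\ref{th:sparse} becomes the variation $Q$. The obstacle is that the centering vector $\bar{\ell} := \frac1T \sum_{s=1}^T \ell_s$ is not known to the player in advance (it depends on the entire loss sequence, including future losses). This is precisely where the Hazan--Kale \emph{reservoir sampling} idea enters: I would maintain a running estimate $\hat{\mu}_t$ of the mean loss vector by occasionally sampling an arm uniformly at random and recording the full loss (reservoir sampling lets us maintain a uniform sample from the losses seen so far), and then feed the \emph{residuals} $\ell_t - \hat{\mu}_t$ into the algorithm of Theorem~\ref{th:sparse} rather than the losses themselves. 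Since the comparator term $\min_{x} \sum_t (\ell_t \cdot x)$ shifts only by the common additive constant $\sum_t (\hat{\mu}_t \cdot x)$ when we subtract a vector proportional to the all-ones direction, running the base algorithm on the residuals does not change the regret against the best fixed arm, up to the error in estimating $\bar{\ell}$.

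Concretely, the first step is to set up the reservoir: at a small number of rounds (chosen so the total exploration cost is $\tilde{O}(\sqrt{Q})$, or at worst contributes to the additive $O(n\log^2 T)$ term), the player pulls a uniformly random arm and uses the observed loss $\ell_t(a_t)$ to update an unbiased estimate of each coordinate of $\bar\ell$. On the remaining rounds the player runs the FTRL strategy of Theorem~\ref{th:sparse} on the estimated residual losses $\tilde\ell_t := \ell_t - \hat\mu_t$. The second step is to control $\sum_t \|\tilde\ell_t\|_2^2$: by the triangle inequality and the definition of $Q$, one has $\sum_t \|\ell_t - \bar\ell\|_2^2 \le Q$, and the additional error from using $\hat\mu_t$ in place of $\bar\ell$ is bounded using the variance of the reservoir estimate, which decays like $1/(\text{number of reservoir samples})$ per coordinate. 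Choosing the number of reservoir rounds to balance the exploration cost against this estimation error yields an effective $L = O(Q + \text{polylog terms})$ to plug into Theorem~\ref{th:sparse}.

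The third step is to assemble the regret. The total regret decomposes as (a) the regret of the base algorithm run on the residuals, which by Theorem~\ref{th:sparse} is at most $10\sqrt{\big(\sum_t\|\tilde\ell_t\|_2^2\big)\log(n)} + 20n\log(T) = \tilde O(\sqrt{Q\log n}) + \tilde O(n\log T)$; plus (b) the cost of the reservoir-sampling rounds, on which the player may suffer up to constant loss per round but whose number is kept at $\tilde O(\sqrt{Q})$ or folded into the additive term; plus (c) the bias incurred because $\hat\mu_t$ is only an estimate of $\bar\ell$ rather than $\bar\ell$ itself, which again is controlled by the reservoir variance. Summing these and absorbing logarithmic factors into the constant $C$ and into the $n\log^2(T)$ term gives the claimed bound.

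The main obstacle I anticipate is the careful interaction between the reservoir estimate and the FTRL analysis: because the residual losses $\tilde\ell_t$ depend on the random reservoir estimate $\hat\mu_t$, they are no longer adversarial in the clean sense assumed by Theorem~\ref{th:sparse}, so one must verify that the loss-estimator and well-conditioning arguments underlying that theorem still go through when the ``losses'' carry this extra randomness and correlation across time. In particular I would need to check that the residuals still lie in a bounded range (so the unbiased importance-weighted estimators remain well behaved) and that the centering does not inflate the attainable magnitudes of the loss estimators beyond what the refined self-concordance condition of the log-barrier regularizer can absorb. Handling this dependence cleanly — likely by conditioning on the reservoir randomness and bounding the estimation error in high probability or in expectation separately — is where most of the technical work will lie; the extra $\log(T)$ factor (compared with Theorem~\ref{th:sparse}'s single $\log T$) in the additive term reflects exactly this overhead.
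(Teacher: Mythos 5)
Your high-level architecture (reservoir sampling to estimate the running mean, centering to reduce variance, reuse of the hybrid-regularizer machinery of Theorem \ref{th:sparse}) matches the paper's, but the central reduction as you state it is broken, for two reasons.

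First, you propose to feed the residuals $\ell_t - \hat{\mu}_t$ directly into the base algorithm and argue that this ``does not change the regret against the best fixed arm'' because subtracting a vector proportional to the all-ones direction shifts all comparators equally. But $\hat{\mu}_t$ (and the true mean $\frac1T\sum_s \ell_s$) is a \emph{general} vector in $[-1,1]^n$, not a multiple of $\ds1$: the whole point of the variation assumption is that the per-arm means may differ. Running FTRL on the residuals controls $\sum_t (\ell_t - \hat{\mu}_t)\cdot(x_t - e_{i^*})$, which differs from the true regret by $\sum_t \hat{\mu}_t\cdot(x_t - e_{i^*})$ --- a term of order $T$ in general (e.g.\ if the loss sequence is essentially constant with one good arm, the residuals carry no information about that arm and your algorithm has no reason to find it). The paper avoids this by keeping the estimator \emph{unbiased for the true loss}: $\tilde{\ell}_t(i) = \frac{(\ell_t - \tilde{\mu}_t)(i)}{x_t(i)}\ds1\{a_t=e_i\} + \tilde{\mu}_t(i)$, so that centering only reduces the variance of the importance-weighted part, not the signal.

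Second, once the estimator is corrected in this way you can no longer invoke Theorem \ref{th:sparse} as a black box, because the relevant variance term is no longer $\E\|\tilde{\ell}_t\|_{x_t,*}^2 \approx \|\ell_t-\tilde{\mu}_t\|_2^2$: the added $\tilde{\mu}_t(i)$ component contributes $\|\tilde{\mu}_t\|_{x_t,*}^2$, which can be a constant per round and would yield an $\eta T$ term. The missing ingredient is the ``optimistic'' refinement of the FTRL bound, equation \eqref{eq:HKregret}, in which the penalty is $\frac{2\eta}{c}\sum_t\|\tilde{\ell}_t - m_t\|_{x_t,*}^2$ for an arbitrary predictable sequence $m_t$, at the price of the path-length term $\sum_t \|m_t - m_{t-1}\|_2$. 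Taking $m_t = \mu_t = \frac1t\sum_{s\le t}\ell_s$ makes the variance term $O(\eta Q(1+\log(T)/k))$ and the path-length term $O(\sqrt{n}\log Q)$ (Lemma 12 of \cite{HK09}). Neither this refined inequality nor the path-length control appears in your plan, and without them the argument does not close.
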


Next we give our main theorems for linear bandit on $\ell_p^n$ balls. Notice that the polar of the $\ell_p^n$ ball is the $\ell_q^n$ ball with $q=p/(p-1)$.

\begin{theorem} \label{th:UBellp}
Let $p \in (1,2]$. There exists a linear bandit algorithm playing on the unit ball of $\ell_p^n$ such that
$$R_T \leq 2^{\frac{6}{p-1}} \sqrt{n T \log(T)} ~.$$
\end{theorem}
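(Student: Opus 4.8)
The plan is to run \textbf{Follow The Regularized Leader} on estimated losses with a $p$-dependent regularizer $\Phi$ tailored to $\cK=\{x:\|x\|_p\le 1\}$, chosen so that it is not merely self-concordant but satisfies the sharper conditioning advertised in the introduction, namely that its Hessian $H(x)=\nabla^2\Phi(x)$ is well-conditioned \emph{in the directions of the polar body} $\cK^\circ=\{\|\ell\|_q\le 1\}$. First I would fix the iterate $x_t=\argmin_{x\in\cK}\bigl(\eta\sum_{s<t}\tilde\ell_s\cdot x+\Phi(x)\bigr)$ and build the unbiased estimator by Dikin-ellipsoid (eigenbasis) sampling: writing $H_t=\sum_i\lambda_i v_i v_i^\top$, draw $I$ uniform in $\{1,\dots,n\}$ and $\epsilon$ uniform in $\{\pm1\}$, play $a_t=x_t+\epsilon\lambda_I^{-1/2}v_I$ (feasible since the unit Dikin ellipsoid lies in $\cK$), observe $y_t=\ell_t\cdot a_t$, and set $\tilde\ell_t=n\,\epsilon\,\lambda_I^{1/2}\,y_t\,v_I$. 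A direct computation gives $\E[\tilde\ell_t\mid H_t]=\ell_t$.

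Next I would invoke the standard FTRL decomposition against a comparator $u$, splitting the regret into a range term and a stability term,
\[
\E R_T\le\frac{1}{\eta}\bigl(\Phi(u)-\min\Phi\bigr)+\eta\sum_{t=1}^T\E\,\|\tilde\ell_t\|_{H_t^{-1}}^2 ,
\]
valid once the local stability condition $\eta\,\|\tilde\ell_t\|_{H_t^{-1}}\le\tfrac12$ is checked. The range term is benign: comparing against the shrunk comparator $(1-1/T)u$ costs $O(1)$ extra regret and, by self-concordance, makes $\Phi((1-1/T)u)-\min\Phi=O(\log T)$. The whole difficulty is concentrated in the stability term, where the computation yields
\[
\E\,\|\tilde\ell_t\|_{H_t^{-1}}^2=n^2(\ell_t\cdot x_t)^2+n\,\|\ell_t\|_{H_t^{-1}}^2 .
\]

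The second piece is exactly where the refined conditioning enters: I would show $\|\ell\|_{H(x)^{-1}}^2\le C_p$ uniformly over $x\in\interior\cK$ and $\ell\in\cK^\circ$, so that $n\,\|\ell_t\|_{H_t^{-1}}^2\le n\,C_p$. This is the step that uses $p\le 2$ in an essential way, via the uniform convexity (curvature) of the $\ell_p^n$ ball, whose modulus degrades as $p\to 1$; I expect the constant $C_p=2^{O(1/(p-1))}$ to come precisely from inverting this modulus, explaining both the stated $2^{6/(p-1)}$ factor and why the argument must break for $p>2$ (where the matching $\tilde\Omega(n\sqrt T)$ lower bound lives). The first piece, the first-order offset $n^2(\ell_t\cdot x_t)^2$, is the genuine obstacle: left untouched it forces the stability term up to $n^2$ and hence the suboptimal $n\sqrt T$ rate. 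I would remove it by subtracting a predictable, history-measurable baseline $c_t$ inside the estimator, replacing $y_t$ by $y_t-c_t$; unbiasedness is preserved because $c_t$ is multiplied by the symmetric sign $\epsilon$, while the offset becomes $n^2(\ell_t\cdot x_t-c_t)^2$, which the curvature of $\cK$ drives down to $O(C_p)$ through an appropriate recentering of the exploration so that the played fluctuation, rather than the absolute loss, dominates the estimator.

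With the offset neutralized one has $\E\,\|\tilde\ell_t\|_{H_t^{-1}}^2\le n\,C_p$, and balancing $\eta=\sqrt{(\log T)/(n\,C_p\,T)}$ gives $\E R_T\lesssim\sqrt{C_p\,n\,T\log T}$, which is the claimed bound with $2^{6/(p-1)}=2\sqrt{C_p\,\cdot O(1)}$. To summarize, the two load-bearing steps are (i) exhibiting a regularizer whose inverse Hessian is bounded by $C_p=2^{O(1/(p-1))}$ on the polar $\ell_q^n$ ball, and (ii) cancelling the first-order term via the baseline/recentering enabled by the curvature of the ball for $p\le 2$; I expect (ii), together with making the conditioning constant in (i) explicit, to be the hardest part of the proof.
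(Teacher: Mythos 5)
Your high-level architecture (FTRL with a barrier for the $\ell_p^n$ ball, per-round estimator variance $O(n)$, then balance $\eta$ to get $\sqrt{nT\log T}$) matches the paper, and your step (i) is essentially right: for $\Phi(x)=-\log(1-\|x\|_p^p)$ one has $\nabla^2\Phi(x)\succeq \frac{p(p-1)}{1-\|x\|_p^p}\mathrm{diag}(|x|^{p-2})$, and H\"older's inequality gives $\|\ell\|_{\nabla^2\Phi(x)^{-1}}^2\le \frac{1}{p(p-1)}$ for every $\ell$ in the polar $\ell_q^n$ ball. But there are two genuine gaps. First, the stability term in FTRL is not $\|\tilde\ell_t\|_{H(x_t)^{-1}}^2$ but a dual local norm at an intermediate point $y_t$ between $x_t$ and $x_{t+1}$; reducing that to the local norm at $x_t$ under the condition $\eta\|\tilde\ell_t\|_{H_t^{-1}}\le\frac12$ is exactly the self-concordance argument, and the paper points out that $-\log(1-\|x\|_p^p)$ is \emph{not} self-concordant for $p\neq 2$ (its Hessian blows up at the origin). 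You never exhibit a regularizer that is simultaneously self-concordant and has the polar conditioning property. The paper's substitute (its lemmas showing $d(y)\le 4\,d(x)$ and $|y(i)|\le 2^{3/(p-1)}|x(i)|+|2\ell(i)|^{1/(p-1)}$ along the dual segment) works only because the loss estimator is $1$-sparse, which is incompatible with your Dikin-eigenbasis sampling, whose eigenvectors are dense in general.

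Second, and more fatally, the baseline trick cannot cancel the offset $n^2(\ell_t\cdot x_t)^2$: the baseline $c_t$ must be measurable with respect to the history while $\ell_t$ is chosen adversarially, so there is no way to guarantee $|\ell_t\cdot x_t-c_t|\lesssim \sqrt{C_p}/n$ --- that would amount to predicting the current round's loss value to accuracy $1/n$. The paper avoids the offset altogether with a different sampling scheme: with probability $\max(d(x_t),\gamma)$ play uniformly on $\{\pm e_1,\dots,\pm e_n\}$ and set $\tilde\ell_t= n\,\frac{\ell_t\cdot\tilde x_t}{\max(d(x_t),\gamma)}\,\tilde x_t$, and otherwise play $x_t/\|x_t\|_p$ with $\tilde\ell_t=0$. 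The estimator then involves only a single coordinate $\ell_t(i)$, which is bounded by $1$ since $\pm e_i\in\cK$, so there is no first-order term to cancel; the price is a $\gamma$-bias contributing $\gamma T$ to the regret, and the resulting $1$-sparsity of $\tilde\ell_t$ is precisely what makes the non-self-concordant stability analysis go through. As written, your proposal does not close either gap.
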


Our lower bound construction for $\ell_p^n$ balls with $p>2$ uses Gaussian losses which satisfy the constraint $\|\ell_t\|_q^q \leq 1$ only in expectation. Note that from standard Gaussian concentration the same bound (up to a logarithmic factor) then holds with high probability. We work with Gaussian losses mostly for clarity of exposition, and at the expense of technical complications one could use losses which satisfy the bound $\|\ell_t\|_q^q \leq 1$ almost surely. We also note that the lower bound is only valid in the large $T$ regime, which is necessary since there exist intermediate regimes of $(T,n)$ where a better regret than $n \sqrt{T}$ is achievable.
 
\begin{theorem} \label{th:LBellp}
Let $p >2$ and $T \geq n^{\max\left(2, \frac{p-1}{p-2}\right)}$. There exists a numerical constant $C>0$ such that for any linear bandit algorithm playing on the unit ball of $\ell_p^n$, there exists $(\ell_t)_{t \in [T]}$, i.i.d. Gaussian random variables in $\R^n$ such that
\begin{equation} \label{eq:gaussianbound}
\E \|\ell_t\|_q^q \leq 1 ~,
\end{equation}
and 
$$\E R_T \geq C n \sqrt{T} ~.$$
\end{theorem}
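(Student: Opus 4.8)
The plan is a minimax lower bound: since the statement reads ``for any algorithm there exists a (Gaussian) instance,'' I will put a prior over i.i.d.\ Gaussian loss laws and show that the Bayes regret is $\Omega(n\sqrt T)$, so that some instance in the support is hard for each algorithm. The first observation is that for i.i.d.\ losses the pseudo-regret is a pure identification cost. Writing $\mu=\E\ell_t$ and using that $a_t$ is independent of $\ell_t$ given the history,
\[
R_T=\sum_{t=1}^T\E[\mu\cdot a_t]+T\|\mu\|_q=\sum_{t=1}^T\E\big[\mu\cdot(a_t-x^\star)\big],\qquad x^\star=\argmin_{\|x\|_p\le 1}\mu\cdot x,
\]
so everything reduces to how quickly the player can align $a_t$ with the unknown minimizing direction $x^\star$ from the scalar feedback $\ell_t\cdot a_t$.

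For the prior I would take a sign perturbation $\mu=\delta\,\epsilon$ with $\epsilon$ uniform on $\{-1,+1\}^n$ (so $x^\star=-n^{-1/p}\epsilon$ and $\|\mu\|_q=\delta n^{1/q}$), together with a Gaussian noise whose covariance $\Sigma$ saturates the budget $\E\|\ell_t\|_q^q\le 1$. The regret scale is then transparent: if the player cannot identify a constant fraction of the sign bits $\epsilon_i$, each such round costs the full ``ignorance'' regret $\|\mu\|_q=\delta n^{1/q}$, so $R_T\gtrsim \delta n^{1/q}T$; choosing $\delta\asymp n^{1/p}/\sqrt T$ makes this exactly $n\sqrt T$. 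The requirement $T\ge n^2$ is already visible here: the mean must respect the budget, $\sum_i|\mu_i|^q=\delta^q n\lesssim 1$, which with $\delta\asymp n^{1/p}/\sqrt T$ forces $T\ge n^{2/p+2/q}=n^2$.

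The core is then to prove that the player genuinely \emph{cannot} identify $\epsilon$ within $T$ rounds at this tiny signal level. I would run an Assouad-type decomposition of $R_T$ into a sum over the $n$ sign bits, bound each summand by a probability of error, and control those through the per-round Kullback--Leibler increment between two neighbouring instances, which for Gaussians is $\KL\asymp \delta^2 a_{t,i}^2/(a_t^\top\Sigma a_t)$. The decisive point --- and the reason $p>2$ is special --- is the interplay of two geometric facts: for $q<2$ the $\ell_q^q$ budget permits far larger (anisotropic, spiky) variances than the isotropic cap $\asymp n^{-2/q}$, while unit vectors of the $\ell_p^n$ ball can have Euclidean norm as large as $n^{1/2-1/p}$. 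I would design $\Sigma$ so that every action carrying information about $x^\star$ pays an observation noise $\sqrt{a_t^\top\Sigma a_t}$ that is a factor $\sim\sqrt n$ larger per unit of usable signal than in the Euclidean case --- i.e.\ the effective per-round information about $x^\star$ is a factor $\sim n$ smaller --- so that the total information the player can collect about $\epsilon$ while staying near-optimal leaves a constant fraction of the bits unresolved. Optimizing $\delta$ against this constraint is what turns the ``easy'' $\sqrt{nT}$ estimation rate into $n\sqrt T$, and the second threshold $T\ge n^{(p-1)/(p-2)}=n^{1/(2-q)}$ is exactly the condition under which a $\Sigma$ realizing this deficit is feasible under the budget (and places us past the intermediate $(T,n)$ regimes, which, as the paper notes, do admit a better rate).

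The hard part will be precisely this covariance design and the matching information bound. A naive hypercube-mean-plus-isotropic-noise instance is only $\sqrt{nT}$-hard: a single spread action simultaneously probes all coordinates and competes, so the player learns ``for free'' while exploiting, and with isotropic $\Sigma=\sigma^2 I$ the total Fisher information $\mathrm{tr}\big(\sum_t a_ta_t^\top/(a_t^\top\Sigma a_t)\big)=T/\sigma^2$ regardless of how the actions are chosen. Beating $\sqrt{nT}$ by the extra factor $\sqrt n$ therefore requires genuinely exploiting the flatness / low curvature of the $\ell_p^n$ ball for $p>2$ --- equivalently the spikiness of the dual $\ell_q^n$ ball --- to decouple the directions that are \emph{informative} about $x^\star$ from those that are \emph{profitable}, so that exploration provably costs near-full regret $\|\mu\|_q$. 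The remaining, more routine, points are checking that the constructed Gaussian law meets $\E\|\ell_t\|_q^q\le 1$ and that the Assouad argument survives the unboundedness of the Gaussian feedback, which I expect to handle with standard sub-Gaussian tail bounds.
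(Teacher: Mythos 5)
You have correctly identified the crux --- that an isotropic-noise, sign-perturbed-mean instance is only $\sqrt{nT}$-hard, and that the extra $\sqrt{n}$ must come from decoupling the directions that are informative about $\xi$ from those that are profitable --- but your proposal leaves exactly that step unresolved, and the route you sketch for it (a clever covariance $\Sigma$ on the $n$ perturbed coordinates of a fully symmetric prior $\mu=\delta\epsilon$) is doubtful. With $\mu=\delta\epsilon$ the optimal action is the spread vector $-n^{-1/p}\epsilon$, so the profitable direction and the informative directions coincide by construction: any near-optimal action is automatically a near-optimal probe of all $n$ bits at once. An anisotropic $\Sigma$ cannot break this, because $\Sigma$ cannot be aligned with the unknown $\epsilon$ without itself leaking $\epsilon$ through the second moments of the feedback; and any fixed $\Sigma$ leaves some spread action that is both low-regret and informative. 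So ``design $\Sigma$ so that every informative action pays a factor $\sqrt{n}$ in noise'' is not a gap you can close within your chosen prior.

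The paper's construction resolves this with a different structural ingredient, acting on the \emph{mean} rather than the covariance: it works in dimension $n+1$ with a distinguished decoy coordinate whose loss is $\cN(-1,1)$, while the remaining $n$ coordinates carry the tiny signal $\cN(\epsilon\xi, n^{-2/q}I_n)$ with $\epsilon^q = C/\sqrt{T}$. The decoy simultaneously (i) forces any low-regret strategy to put $x_t$ close to $1$ on that coordinate (Lemma \ref{step2}), and (ii) injects \emph{constant} variance $x_t^2$ into the one-dimensional feedback $f_t = x_t w_t + y_t\cdot z_t^{\xi}$, so that during exploitation the per-round information about each bit is $\epsilon^2 y_t(i)^2/\sigma_t^2$ with $\sigma_t^2 \geq 1/16$ and $\|y_t\|_2^2 \lesssim n^{1-2/p}(1-|x_t|^p)^{2/p}$ tiny; exploration rounds ($x_t\leq 1/4$) each cost constant regret and so number at most $O(n\sqrt{T})$. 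Your Assouad/KL machinery, your identification-cost reduction of the regret, and your accounting of where $T\geq n^2$ comes from all carry over, but without the decoy coordinate (or an equivalent mechanism making exploitation feedback noisy and exploration expensive) the argument does not reach $n\sqrt{T}$. As written, the proposal is a correct problem analysis with the key construction missing.
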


We recall the starved bandit setting introduced above. At the beginning of the game the player chooses an exploration distribution $\mu \in \Delta(\cK)$. At any time $t$ the player can choose to play $a_t$ at random, either from $\mu$ or from an adaptive distribution $p_t$ (where $p_t$ depends on the observed feedback so far). The loss of the player is $\ell_t(a_t)$. The feedback is either (i) nothing if $a_t$ was played from $p_t$, or (ii) the standard bandit feedback $\ell_t(a_t)$ if $a_t$ was played from $\mu$. For sake of simplicity we assume that if $\cK$ contains the (signed) canonical basis then $\mu$ is uniform on the (signed) canonical basis. 

We observe that Theorem \ref{th:UBellp} holds true for the starved linear bandit framework too (indeed the strategy we give to prove Theorem \ref{th:UBellp} is a starved bandit strategy). Our main additional result for this setting is to show that for any $p$ not covered by Theorem \ref{th:UBellp} one cannot achieve $\sqrt{T}$-type regret:

\begin{theorem} \label{th:starved}
For any strategy for the starved multi-armed bandit there exists a loss sequence such that $R_T \geq \frac1{20} n^{1/3} T^{2/3}$. The same lower bound holds for the starved linear bandit on the $\ell_1^n$ ball. Furthemore for any $p>2$ there exists a constant $C>0$ such that for any starved linear bandit algorithm playing on the unit ball of $\ell_p^n$, there exists $(\ell_t)_{t \in [T]}$, i.i.d. Gaussian random variables in $\R^n$ satisfying \eqref{eq:gaussianbound} and such that
$$\E R_T \geq C n^{\frac{q}{2 + q}} T^{\frac{2}{2 + q}} ~.$$
\end{theorem}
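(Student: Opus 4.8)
The plan is to prove the three lower bounds by reducing each to an explore‑versus‑exploit tradeoff, exploiting the defining feature of the starved model: information is acquired only by playing the fixed distribution $\mu$ (uniform on the signed basis), which is simultaneously expensive (the basis directions are far from the optimum) and statistically diluted ($m$ exploration rounds reveal only $\approx m/n$ noisy samples per coordinate). In every case I will lower bound the pseudo‑regret by (exploration cost) $+$ (exploitation cost), balance the two over the number of exploration rounds $m$, and finally optimise the construction parameters against the loss constraint.

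For the starved multi‑armed bandit — and hence for the $\ell_1^n$ ball, whose extreme points are exactly $\pm e_i$ so that the problem reduces to bandits with $\mu$ uniform on the signed basis — I would use a construction that decouples the per‑round exploration cost from the gap to be estimated: two ``good'' arms with losses $\tfrac12\mp\tfrac\Delta2$ whose ranking is random, together with $n-2$ ``bad'' arms of loss $1$. Uniform exploration then costs $\Theta(1)$ per round (dominated by the bad arms), yet a uniform sample lands on a good arm only with probability $\approx 2/n$, so by a Le Cam / two‑point argument distinguishing the two environments requires $\gtrsim n/\Delta^2$ exploration rounds. Thus $R_T\gtrsim\min_m\big(\Theta(m)+T\Delta\,\IND[m\lesssim n/\Delta^2]\big)\gtrsim\min(n/\Delta^2,\,T\Delta)$, and the choice $\Delta=(n/T)^{1/3}$ gives the claimed $\Omega(n^{1/3}T^{2/3})$.

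For the $\ell_p^n$ ball with $p>2$ and i.i.d.\ Gaussian losses $\ell_t\sim\mathcal N(\theta,\sigma^2\mathrm I)$ I would implement the same decoupling using a two‑scale mean and the curvature of the dual norm $\|\cdot\|_q$. Take a bulk coordinate $\theta_1=\rho=\Theta(1)$ (quickly learned, and forcing $\|\theta\|_q=\Theta(1)$, so that each exploration round costs $\approx\rho$, decoupled from the fine scale) and a fine signal $\theta_i=\beta s_i$ on the remaining $\approx n$ coordinates with i.i.d.\ random signs $s_i$, with noise $\sigma=\Theta(n^{-1/q})$ chosen so that $\mathbb E\|\ell_t\|_q^q\le1$. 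Because the optimiser sits near the flat coordinate face $-e_1$ of the $(p>2)$ ball, the Hessian of $\|\cdot\|_q$ in the fine directions is $\asymp(q-1)\beta^{q-2}/\rho^{q-1}$, and consequently failing to resolve the fine signs costs $G\asymp s\beta^q/\rho^{q-1}$ per round; it is through this curvature coefficient that the exponent $q$ enters. Resolving the signs requires driving the per‑coordinate error $\sigma\sqrt{n/m}$ below $\beta$, i.e.\ $m\gtrsim n\sigma^2/\beta^2$ queries at cost $\rho$ each, so $R_T\gtrsim\min_m\big(\rho m+TG\,\IND[m\lesssim n\sigma^2/\beta^2]\big)$. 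Balancing and then optimising over $\beta$ (with $s\approx n$ and $\rho,\sigma$ saturating the norm budget) yields regret of order $n^{q/(2+q)}T^{2/(2+q)}$. The hypothesis $T\ge n^{\max(2,(p-1)/(p-2))}$ is exactly what keeps this construction consistent — it guarantees $\beta\lesssim\sigma$, that the bulk dominates the fine part of the norm, and that resolution is feasible within the horizon (equivalently $n^{q/(2+q)}T^{2/(2+q)}\le T$). As a sanity check the exponents interpolate correctly: $q\to2$ (i.e.\ $p\to2$) gives $\sqrt{nT}$, matching the upper bound of Theorem~\ref{th:UBellp}, while $q\to1$ (i.e.\ $p\to\infty$) approaches the $n^{1/3}T^{2/3}$ rate of the multi‑armed case.

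The main obstacle is making the exploitation bound rigorous against arbitrary adaptive strategies that only partially learn the fine signal. The clean handle is that the per‑round exploitation regret equals the Bregman divergence $D_{\|\cdot\|_q}(\theta,\widehat\theta_t)$ of the dual norm between $\theta$ and the player's effective estimate; I would lower bound it by its quadratic curvature form (whose coefficient $\asymp(q-1)\beta^{q-2}/\rho^{q-1}$ in the fine directions carries the exponent $q$) and then translate it into the posterior uncertainty of the signs $(s_i)$ given the feedback. A data‑processing / mutual‑information inequality then bounds this uncertainty through the per‑coordinate sample count $\approx m/n$, showing that the fraction of $G$ removable after $m$ queries is $O\!\big(\sqrt{m\beta^2/(n\sigma^2)}\big)$. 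The delicate point — and the technical heart of the proof — is that one must control this soft, partial estimation (not merely the count of fully identified signs) and the non‑quadratic normalisation terms of $\|\cdot\|_q$ near the coordinate face; fortunately, at the balancing scale of $\beta$ this extractable fraction is a constant strictly below $1$, so the residual $\Omega(TG)=\Omega(n^{q/(2+q)}T^{2/(2+q)})$ exploitation regret survives.
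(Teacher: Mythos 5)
Your proposal is correct and follows essentially the same route as the paper: the same explore/exploit decomposition in which every $\mu$-round costs $\Theta(1)$ (because of the $n-2$ bad arms, resp.\ the bulk first coordinate), the same bulk-plus-fine-random-signs Gaussian construction for $p>2$ reusing Lemma \ref{step1} and the information bound \eqref{eq:X} restricted to exploration rounds, and the same balancing giving $a=(1-a)q/2$. The only detail to make explicit is that the two good arms in the multi-armed case must carry \emph{Bernoulli} losses of means $\tfrac12\mp\tfrac{\Delta}{2}$ rather than deterministic ones --- as your $n/\Delta^2$ sample-complexity count already presupposes --- since deterministic losses would be identified from a single exploration round.
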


\subsection{Notation}
We use the following (standard) notation: $\Delta(\cK)$ for the set of probability measures supported on $\cK$, $\Delta = \{x \in \R_+^n : \sum_{i=1}^n x(i) =1\}$ for the simplex, $\|x\|_p = \left( \sum_{i=1}^n |x(i)|^p\right)^{1/p}$ for the $\ell_p^n$ norm, $\Phi^*(\theta) = \sup_{x \in \R^n} \theta \cdot x - \Phi(x)$ for the Fenchel dual of $\Phi : \R^n \rightarrow \overline{\R}$, $D_{\Phi}(x,y) = \Phi(x) - \Phi(y) - \nabla \Phi(y) \cdot (y-x)$ for the Bregman divergence associated to $\Phi$, $\|h\|_x = \sqrt{\nabla^2 \Phi(x)[h,h]}$ for the local norm induced by $\Phi$ at $x$, $\|h\|_{x,*} = \sqrt{(\nabla^2 \Phi(x))^{-1}[h,h]}$ for the dual local norm, $\odot$ for the Hadamard product (i.e., entrywise product of vectors), and $\succeq$ for the positive semi-definite ordering on matrices. 

\section{Bandit theory reminders} \label{sec:reminders}
We give a few brief reminders of multi-armed bandit and linear bandit theory.

\subsection{Full information strategies}
In this section we assume that $\cK$ is a convex body in $\R^n$. We fix a learning rate $\eta >0$ and a mirror map $\Phi : \R^n \rightarrow \overline{\R}$, that is a strictly convex and differentiable map with $\nabla \Phi(\R^n) = \R^n$ and diverging gradient as one approaches the boundary of its domain. The following theorem is a standard result on the mirror descent strategy for online linear optimization (with full information), see e.g., [Theorem 5.5, \cite{BC12}].

\begin{theorem} \label{th:MD}
Let $\ell_1, \hdots, \ell_T \in \R^n$ be a fixed sequence of loss vectors and let $x_1, \hdots, x_T \in \cK$ be defined by: $x_1 = \argmin_{x \in \cK} \Phi(x)$ and
\begin{equation} \label{eq:careful}
x_{t+1} = \argmin_{x \in \cK} D_{\Phi}(x, \nabla \Phi^*(\nabla \Phi(x_t) - \eta \ell_t)) .
\end{equation}
Then one has for any $x \in \cK$,
\begin{equation} \label{eq:basicMD}
\sum_{t=1}^T \ell_t \cdot (x_t - x) \leq \frac{\Phi(x) - \Phi(x_1)}{\eta} + \frac{1}{\eta} \sum_{t=1}^T D_{\Phi^*}\bigg(\nabla \Phi(x_t) - \eta \ell_t, \nabla \Phi(x_t)\bigg)~.
\end{equation}
Futhermore assuming that the following implication holds true for any $y_t \in \R^n$,
\begin{equation} \label{eq:condMD}
\nabla \Phi(y_t) \in [\nabla\Phi(x_t), \nabla\Phi(x_{t}) - \eta \ell_t] \Rightarrow \nabla^2 \Phi(y_t) \succeq c \nabla^2 \Phi(x_t)
\end{equation}
one obtains
\begin{equation} \label{eq:basicregret}
\sum_{t=1}^T \ell_t \cdot (x_t - x) \leq \frac{\Phi(x) - \Phi(x_1)}{\eta} + \frac{\eta}{2 c} \sum_{t=1}^T \|\ell_t\|_{x_t,*}^2 ~.
\end{equation}
\end{theorem}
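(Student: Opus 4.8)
The plan is to establish the crude bound \eqref{eq:basicMD} by the classical mirror descent argument, and then sharpen it to \eqref{eq:basicregret} through a second-order Taylor expansion of the dual Bregman term whose remainder is exactly what hypothesis \eqref{eq:condMD} is built to control. First I would introduce the unconstrained mirror step $\tilde x_{t+1} = \nabla\Phi^*(\nabla\Phi(x_t) - \eta\ell_t)$, so that $\nabla\Phi(\tilde x_{t+1}) = \nabla\Phi(x_t) - \eta\ell_t$ and $x_{t+1}$ is the Bregman projection of $\tilde x_{t+1}$ onto $\cK$. Writing $\eta\,\ell_t = \nabla\Phi(x_t) - \nabla\Phi(\tilde x_{t+1})$ and applying the three-point identity for the Bregman divergence gives, for every $x \in \cK$,
$$\eta\,\ell_t\cdot(x_t - x) = D_\Phi(x, x_t) - D_\Phi(x, \tilde x_{t+1}) + D_\Phi(x_t, \tilde x_{t+1}) ~.$$
The generalized Pythagorean inequality for the projection then yields $D_\Phi(x, \tilde x_{t+1}) \geq D_\Phi(x, x_{t+1})$, so summing over $t$ telescopes the first two terms into $D_\Phi(x, x_1) - D_\Phi(x, x_{T+1}) \leq D_\Phi(x, x_1)$.

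To finish \eqref{eq:basicMD} I would use two standard facts. First, the first-order optimality of $x_1 = \argmin_{x\in\cK}\Phi(x)$ gives $\nabla\Phi(x_1)\cdot(x - x_1)\geq 0$, hence $D_\Phi(x, x_1)\leq \Phi(x) - \Phi(x_1)$. Second, the primal--dual duality of Bregman divergences, $D_\Phi(x_t, \tilde x_{t+1}) = D_{\Phi^*}(\nabla\Phi(\tilde x_{t+1}), \nabla\Phi(x_t)) = D_{\Phi^*}(\nabla\Phi(x_t) - \eta\ell_t, \nabla\Phi(x_t))$, rewrites the remaining sum; dividing by $\eta$ produces \eqref{eq:basicMD} exactly.

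For \eqref{eq:basicregret} I would bound each dual divergence by localizing its Taylor remainder. Since $D_{\Phi^*}(u,v) = \tfrac12 (u-v)^\top \nabla^2\Phi^*(\xi)\,(u-v)$ for some $\xi$ on the segment $[u,v]$, taking $u = \nabla\Phi(x_t) - \eta\ell_t$ and $v = \nabla\Phi(x_t)$ gives $u - v = -\eta\ell_t$, so the term equals $\tfrac{\eta^2}{2}\,\ell_t^\top\nabla^2\Phi^*(\xi_t)\,\ell_t$ with $\xi_t\in[\nabla\Phi(x_t), \nabla\Phi(x_t) - \eta\ell_t]$. Setting $y_t = \nabla\Phi^*(\xi_t)$, the inverse-Hessian identity $\nabla^2\Phi^*(\nabla\Phi(y_t)) = (\nabla^2\Phi(y_t))^{-1}$ converts this to $\tfrac{\eta^2}{2}\,\ell_t^\top(\nabla^2\Phi(y_t))^{-1}\ell_t$. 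Because $\nabla\Phi(y_t) = \xi_t$ lies on precisely the gradient-space segment appearing in the hypothesis, \eqref{eq:condMD} yields $\nabla^2\Phi(y_t)\succeq c\,\nabla^2\Phi(x_t)$, hence $(\nabla^2\Phi(y_t))^{-1}\preceq \tfrac1c(\nabla^2\Phi(x_t))^{-1}$, and the term is at most $\tfrac{\eta^2}{2c}\|\ell_t\|_{x_t,*}^2$. Substituting into \eqref{eq:basicMD} gives \eqref{eq:basicregret}.

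The main obstacle is the bookkeeping that makes the Taylor-remainder point $y_t$ land on the dual segment that \eqref{eq:condMD} controls: this is the entire reason the conditioning hypothesis is phrased in gradient space rather than primal space, and getting the correspondence $\xi_t = \nabla\Phi(y_t)$ right is what allows the second-order term to be compared against the local norm at $x_t$ itself. The remaining delicacy is justifying the generalized Pythagorean inequality and the well-definedness of the Bregman projection, which rely on the standing assumptions on the mirror map (strict convexity and diverging gradient at the boundary of its domain).
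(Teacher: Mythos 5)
Your proof is correct: the three-point identity plus generalized Pythagorean inequality yields \eqref{eq:basicMD} via the Bregman duality $D_\Phi(x_t,\tilde x_{t+1}) = D_{\Phi^*}(\nabla\Phi(x_t)-\eta\ell_t,\nabla\Phi(x_t))$, and the Lagrange-form Taylor remainder of $D_{\Phi^*}$ combined with $\nabla^2\Phi^*(\nabla\Phi(y_t)) = (\nabla^2\Phi(y_t))^{-1}$ and hypothesis \eqref{eq:condMD} gives \eqref{eq:basicregret}. The paper offers no proof of this theorem, deferring to [Theorem 5.5, \cite{BC12}], and your argument is precisely that standard one (note only that the paper's displayed definition of $D_\Phi$ has a sign typo, $(y-x)$ for $(x-y)$; you correctly use the standard convention).
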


We will also use the lazy variant of mirror descent, also known as FTRL (Follow The Regularized Leader), and its corresponding ``primal only'' analysis. In particular while for mirror descent one has to check that $\Phi$ is ``well-conditioned'' on a ``dual segment'' (equation \eqref{eq:condMD}) we will see below that for FTRL one needs to check the well-conditioning on a ``primal segment'' (equation \eqref{eq:condFTRL}). Note also that mirror descent and FTRL give the same update equation when $\Phi$ is a barrier for $\cK$ (see e.g., \cite{Bub15}), which is often the case in bandit scenario.

\begin{theorem} \label{th:lazyMD}
Let $\ell_1, \hdots, \ell_T \in \R^n$ be a fixed sequence of loss vectors and let $x_1, \hdots, x_T \in \cK$ be defined by:
\begin{equation} \label{eq:careful2}
x_{t} = \argmin_{x \in \cK} \eta \sum_{s=1}^{t-1} \ell_s \cdot x + \Phi(x)  .
\end{equation}
Then one has for any $x \in \cK$,
\begin{equation} \label{eq:induction}
\sum_{t=1}^T \ell_t \cdot (x_t - x) \leq \frac{\Phi(x) - \Phi(x_1)}{\eta} + \sum_{t=1}^T \ell_t \cdot (x_t - x_{t+1}) ~.
\end{equation}
Futhermore assuming that the following implication holds true for any $y_t \in \R^n$,
\begin{equation} \label{eq:condFTRL}
y_t \in [x_t, x_{t+1}] \Rightarrow \nabla^2 \Phi(y_t) \succeq c \nabla^2 \Phi(x_t)
\end{equation}
then one has that \eqref{eq:basicregret} holds true with the term $\frac{\eta}{2c}$ replaced by $\frac{2 \eta}{c}$.
\end{theorem}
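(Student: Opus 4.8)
The plan is to establish the two displayed inequalities in turn: first the deterministic decomposition \eqref{eq:induction}, which holds for \emph{any} convex $\Phi$, and then the stability estimate that converts the residual term $\sum_t \ell_t\cdot(x_t-x_{t+1})$ into $\tfrac{2\eta}{c}\sum_t\|\ell_t\|_{x_t,*}^2$ under hypothesis \eqref{eq:condFTRL}.

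For \eqref{eq:induction} I would fold the regularizer into the loss sequence, writing $f_0=\Phi/\eta$ and $f_t(x)=\ell_t\cdot x$ for $t\ge 1$, so that $x_t=\argmin_{x\in\cK}\sum_{s=0}^{t-1}f_s(x)$ is precisely the leader after round $t-1$ (rescaling by $\eta$ does not move the $\argmin$). The engine is the classical ``be-the-leader'' inequality $\sum_{t=0}^{T}f_t(x_{t+1})\le\sum_{t=0}^{T}f_t(x)$ for every $x\in\cK$, which I would prove by induction on $T$: the base case $T=0$ is the definition of $x_1$, and the inductive step adds $f_T(x_{T+1})$ to both sides of the hypothesis and invokes that $x_{T+1}$ minimizes $\sum_{s=0}^{T}f_s$. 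Unwinding $f_0=\Phi/\eta$ turns this into $\tfrac{\Phi(x_1)}{\eta}+\sum_{t=1}^{T}\ell_t\cdot x_{t+1}\le\tfrac{\Phi(x)}{\eta}+\sum_{t=1}^{T}\ell_t\cdot x$, and adding $\sum_t\ell_t\cdot(x_t-x_{t+1})$ to both sides rearranges exactly into \eqref{eq:induction}.

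For the stability term I would work round by round with $G_t(x):=\eta\sum_{s<t}\ell_s\cdot x+\Phi(x)$, so $x_t=\argmin_\cK G_t$ and $G_{t+1}(x)=G_t(x)+\eta\,\ell_t\cdot x$. Two optimality facts combine. Expanding $G_t$ around $x_t$ by the integral form of Taylor's theorem, using $\nabla^2 G_t=\nabla^2\Phi$ together with \eqref{eq:condFTRL} along the segment $[x_t,x_{t+1}]$ and the nonnegativity of the first-order term (the variational inequality $\nabla G_t(x_t)\cdot(x_{t+1}-x_t)\ge 0$), gives the one-sided strong-convexity estimate $G_t(x_{t+1})-G_t(x_t)\ge\tfrac{c}{2}\|x_t-x_{t+1}\|_{x_t}^2$. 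On the other hand optimality of $x_{t+1}$ for $G_{t+1}$ yields $G_{t+1}(x_t)\ge G_{t+1}(x_{t+1})$, i.e. $\eta\,\ell_t\cdot(x_t-x_{t+1})\ge G_t(x_{t+1})-G_t(x_t)$. Chaining the two gives $\eta\,\ell_t\cdot(x_t-x_{t+1})\ge\tfrac{c}{2}\|x_t-x_{t+1}\|_{x_t}^2$; Cauchy--Schwarz in the local norm, $\ell_t\cdot(x_t-x_{t+1})\le\|\ell_t\|_{x_t,*}\|x_t-x_{t+1}\|_{x_t}$, then forces $\|x_t-x_{t+1}\|_{x_t}\le\tfrac{2\eta}{c}\|\ell_t\|_{x_t,*}$, and one further Cauchy--Schwarz delivers the per-round bound $\ell_t\cdot(x_t-x_{t+1})\le\tfrac{2\eta}{c}\|\ell_t\|_{x_t,*}^2$. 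Summing over $t$ and substituting into \eqref{eq:induction} produces \eqref{eq:basicregret} with $\tfrac{\eta}{2c}$ replaced by $\tfrac{2\eta}{c}$.

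I expect the only delicate step to be the correct deployment of \eqref{eq:condFTRL}: the Hessian comparison is postulated only on the \emph{primal} segment $[x_t,x_{t+1}]$, which is exactly the set over which the Taylor remainder integrates, so using the integral remainder (rather than a single midpoint Hessian) is what keeps the application clean and the constant honest. A secondary subtlety is that the minimizers $x_t,x_{t+1}$ may sit on $\partial\cK$, so I would systematically use the variational inequalities for $x_t$ and $x_{t+1}$ rather than vanishing gradients; because $\Phi$ is a mirror map its iterates stay in the interior of $\mathrm{dom}(\Phi)$, guaranteeing that $\nabla^2\Phi$ exists all along the segment and the expansion is valid.
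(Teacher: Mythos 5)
Your proposal is correct and follows essentially the same route as the paper: the Be-The-Leader induction for \eqref{eq:induction}, and then the two optimality inequalities (variational inequality at the minimizer plus optimality of the next iterate) chained with a second-order Taylor bound and Cauchy--Schwarz in the local norm to get $\|x_t-x_{t+1}\|_{x_t}\le \tfrac{2\eta}{c}\|\ell_t\|_{x_t,*}$. The only (cosmetic) differences are that the paper expands $\Phi_t$ around $x_{t+1}$ with a mean-value remainder at a single $y_t\in[x_t,x_{t+1}]$ and invokes \eqref{eq:condFTRL} once at the end, whereas you expand around $x_t$ with the integral remainder and apply \eqref{eq:condFTRL} pointwise along the segment; both yield the same constant.
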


\begin{proof}
The proof of \eqref{eq:induction} is a classical one-line induction (sometimes referred to as the Be-The-Leader lemma). We turn to \eqref{eq:basicregret} and note that it suffices to show that $\|x_t - x_{t+1}\|_{x_t} \leq \frac{2 \eta}{c} \|\ell_t\|_{x_t,*}$. Observe that, using a Taylor expansion, for some $y_t \in [x_t,x_{t+1}]$ one has, with the notation $\Phi_t(x):= \eta \sum_{s=1}^t \ell_s \cdot x + \Phi(x)$ (thus $x_{t+1} \in \argmin \Phi_t$ and $x_t \in \argmin \Phi_t - \eta \ell_t$),
\begin{eqnarray*}
\frac12 \|x_t-x_{t+1}\|_{y_t}^2 = \Phi_t(x_t) - \Phi_t(x_{t+1}) - \nabla \Phi_t(x_{t+1}) \cdot (x_t - x_{t+1}) & \leq & \Phi_t(x_t) - \Phi_t(x_{t+1}) \\
& \leq & \eta {\ell}_t \cdot (x_t - x_{t+1}) ~.
\end{eqnarray*}
Using that $\nabla^2 \Phi(y_t) \succeq c \nabla^2 \Phi(x_t)$ one also has $\|x_t - x_{t+1}\|_{x_t}^2 \leq \frac{1}{c} \|x_t-x_{t+1}\|_{y_t}^2$ and thus
$$\|x_t-x_{t+1}\|_{x_t}^2 \leq \frac{2 \eta}{c} {\ell}_t \cdot (x_t - x_{t+1}) \leq \frac{2 \eta}{c} \|\ell_t\|_{x_t,*} \|x_t -x_{t+1}\|_{x_t} ~,$$
which concludes the proof. 
\end{proof}

\subsection{Bandit strategies} \label{sec:classicalbandit}
In addition to choosing a regularizer, a bandit strategy also rely on a sampling scheme, that is a map $p : \mathrm{conv}(\cK) \rightarrow \Delta(\cK)$ such that $\E_{X \sim p(x)} X = x$. One then runs FTRL (or mirror descent), with the (unobserved) true losses $\ell_t$ replaced by estimators $\tilde{\ell}_t$ (constructed based on the observed feedback). Moreover instead of playing the point $x_t$ recommended by FTRL, i.e., $x_t = \argmin_{x \in \mathrm{conv}(\cK)} \sum_{s=1}^{t-1} \tilde{\ell}_s \cdot x + \Phi(x)$, one plays at random $a_t \sim p(x_t)$ (where the sampling is done independently of the past given $x_t$). The key point is that if the loss estimator is unbiased, i.e., $\E_{a_t \sim p(x_t)} \tilde{\ell}_t = \ell_t$, then one has for any $x \in \cK$,
$$\E \sum_{t=1}^T \ell_t \cdot (a_t - x) = \E \sum_{t=1}^T \tilde{\ell}_t \cdot (x_t - x) ~,$$
and thus one can use Theorem \ref{th:MD} or Theorem \ref{th:lazyMD} to bound the regret. In particular assuming that one can prove the well-conditioning condition \eqref{eq:condMD} or \eqref{eq:condFTRL}, the key quantity to control is the ``variance'' of the loss estimator appearing in \eqref{eq:basicregret}, namely $\E \ \|\tilde{\ell}_t\|_{x_t,*}^2$.
\newline

To illustrate the above discussion let us briefly recall the classical multi-armed bandit setting (i.e., $\cK=\{e_1, \hdots, e_n\}$) with \textbf{nonnegative losses}. We use mirror descent with $\Phi(x) = \sum_{i=1}^n x(i) \log x(i)$,
the sampling scheme $p : \Delta \rightarrow \Delta(e_1,\hdots e_n)$ is simply the identity map (in the sense that $\P_{a \sim p(x)}(a=e_i) = x(i)$), and the unbiased loss estimator is 
$$\tilde{\ell}_t(i) = \frac{\ell_t(i)}{x_t(i)} \ds1\{a_t = e_i\} ~.$$

The key is to observe that since $\tilde{\ell}_t$ has nonegative entries, one has that \eqref{eq:condMD} is satisfied with $c=1$, and thus \eqref{eq:basicregret} gives
$$R_T \leq \frac{\log(n)}{\eta} + \frac{\eta}{2} \sum_{t \in [T], i \in [n]} \E \ \|\tilde{\ell}_{t}\|_{x_t,*}^2 ~.$$
The last thing to observe is that, since $\|h\|_x^2 = \sum_{i=1}^n \frac{h(i)^2}{x(i)}$, one has
$$\E \ \|\tilde{\ell}_{t}\|_{x_t,*}^2 = \E \sum_{i=1}^n x_t(i) \tilde{\ell}_t(i)^2 = \E \sum_{i=1}^n x_t(i) \frac{{\ell}_t(i)^2}{x_t(i)} \ds1\{a_t=e_i\} = \|\ell_t\|_2^2 ~.$$
Thus with an appropriate choice of $\eta$ one gets
\begin{equation} \label{eq:classicalregret}
R_T \leq \sqrt{\frac{\log(n)}{2} \sum_{t=1}^T \|\ell_t\|_2^2} ~.
\end{equation}
As a side note we observe that using the polynomial INF regularizer of \cite{AB09} (see Section \ref{sec:intuition} for a brief reminder on the INF regularizer), for any primal dual pair $p, q \geq 1$, one obtains an algorithm with a regret bound scaling in $\frac{q}{q-1} \sqrt{n^{1/q} \sum_{t=1}^T \|\ell_t\|_{2p}^2}$. 

\section{Sparsity and variation bounds for multi-armed bandit}
We start first by describing some basic obstacles to obtain a sparsity type bound in Section \ref{sec:obstacles}. Then in Section \ref{sec:intuition} we give some intuition for our new ``hybrid regularizer'', $\sum_{i=1}^n x(i) \log(x(i)) - \gamma \sum_{i=1}^n \log(x(i))$, that is the weighted combination of the negentropy and the logarithmic barrier for the positive orthant\footnote{The logarithmic barrier was recently used as a regularizer for bandits in \cite{Foster16} to obtain first order regret bounds. We note however that the behavior of our hybrid regularizer is fundamentally different from using only the log-barrier term.}. The extra logarithmic barrier term can be understood as a soft way to encourage exploration (to the contrary of the usual forced exploration). Finally in Section \ref{sec:proofsparse} we prove Theorem \ref{th:sparse} (this section is self-contained and does not require reading the two previous subsections).

\subsection{Basic obstacles} \label{sec:obstacles}
The basic issue is that \eqref{eq:classicalregret} only holds for nonnegative losses\footnote{Notice that one cannot simply shift the losses as this could potentially suppress sparsity.}. The reason nonnegativity was needed is that the well-conditioned assumption for the negentropy $\Phi$, equation \eqref{eq:condMD}, crucially relies on the fact that (note that $\nabla \Phi = \log, \nabla^2 \Phi = \mathrm{diag}(1/x)$) for $\log(y) = \log(x) - \ell$ with $\ell \geq 0$ one has $1/y\geq 1/x$. A standard fix to maintain the latter inequality approximately true for general losses is to ensure that the magnitude of the (estimated) loss is controlled. Indeed \eqref{eq:condMD} is satisfied for some constant $c$ provided that almost surely $\|\eta \tilde{\ell}_t\|_{\infty} \leq \log(1/c)$. This almost sure control can be achieved by adding forced exploration, as was done in the original adversarial multi-armed bandit paper \cite{ACFS03}, that is the sampling scheme is now $(1-n \gamma) x_t + \gamma \ds1$, or in words explore uniformly at random with probability $n \gamma$ and otherwise play from $x_t$. Indeed in this case $\|\eta \tilde{\ell}_t\|_{\infty} \leq {\eta} / {\gamma}$,
and thus the well-conditioned assumption \eqref{eq:condMD} is satisfied when $\gamma \simeq \eta$. However the added regret (with respect to $i^* \in [n]$) suffered by the extra exploration is exactly $\gamma \sum_{i,t} (\ell_t(i) - \ell_t(i^*))$. This latter term destroys the scaling with sparsity (for example if $\ell_t = -e_{i^*}$ then this term is of order $\gamma (n-1) T \simeq \eta n T$). More prosaically, the uniform exploration might make us miss out on a $n \gamma$ fraction of the ``gains'' of the best arm, which could be far too much. We also observe that the recently proposed implicit exploration by \cite{KNVM14} (see also \cite{Neu15}) suffers from the exact same issue.
\newline

We also note that, without going into any technical details, the case of arbitrary losses seem harder than the case of nonnegative losses. Indeed the former contains the case of nonpositive losses, or equivalently nonnegative {\em gains}. Sparse nonnegative losses mean that most arms are performing well and only a handful are to be avoided. On the other hand sparse nonnegative gains mean that most arms are bad, and only a handful are performing well. Intuitively, finding this small set of good arms hiding in a sea of bad arms is harder than avoiding a small set of bad arms in a sea of good arms.

\subsection{Intuition for the hybrid regularizer} \label{sec:intuition}
The intuition is divided in two parts: (i) the fact that the added regret for $\gamma >0$ is controlled, and (ii) that the well-conditioning still holds. 

For the first part we start with a slightly different point of view on extra (forced) exploration. It is easy to check that adding extra exploration exactly corresponds to taking the regularizer to be a ``negatively shifted negentropy'': $\sum_{i=1}^n (x(i) - \gamma) \log(x(i) - \gamma)$. For such a regularizer the range $\Phi(x) - \Phi(x_1)$ is controlled only for $x$'s such that $\min_{i \in [n]} x(i) > \gamma$. In the worst case the gap between the regret with respect to such $x$'s, and with respect to an arbitrary $x$ can be as large as $n \gamma T$, and since the well-conditioned assumption requires $\gamma \simeq \eta$ this leads us to the extra term $\eta n T$. On the other hand for the hybrid barrier one can compare to $x$'s with $\min_{i \in [n]} x(i) =1/\mathrm{poly}(T)$, only at the expense of a term of the form $\frac{\gamma n \log(T)}{\eta}$. Thus provided that the well-conditioning assumption remains true for $\gamma \simeq \eta$ (this is the key part to verify) the hybrid regularizer could lead to a bound of the form \eqref{eq:classicalregret} up to to an extra additive term of order $n \log(T)$.

For the well-conditioning intuition we first recall the INF parametrization of a regularizer (\cite{ABL14}): For $\psi : \mathbb{R} \rightarrow \R$, let $\Phi$ be defined by $\nabla \Phi^*(x) := (\psi(x_i))_{i \in [n]}$. The negentropy regularizer exactly corresponds to $\psi(s) = \exp(s)$ while adding forced extra exploration with probability $n \gamma$ can be achieved by taking $\psi(s) = \exp(s) + \gamma$. The hybrid regularizer essentially corresponds to taking $\psi(s)$ to be the exponential function when $\psi(s) \geq \gamma$, and otherwise to be roughly like $\frac{\gamma \log \gamma}{s}$. In particular we see that the well-conditioning is satisfied for $\gamma \simeq \eta$ when the played arm has probability greater than $\gamma$ (since in this case everything behaves essentially as with forced exploration), and on the other hand when the played arm has probability smaller $\gamma$, its probability $x$ is of the form $1/L$ and the updated probability is $1/(L + 1/x) \simeq x$, and thus the well-conditioning also holds in this case.

\subsection{Proof of Theorem \ref{th:sparse}} \label{sec:proofsparse}

Observe that the hybrid regularizer $\Phi$ is lower bounded by the negentropy in the sense that $\nabla^2 \Phi(x) \succeq \mathrm{diag}(1/x(i))$. Thus the standard argument of Section \ref{sec:classicalbandit} shows that 
$$\E \ \|\tilde{\ell}_t\|_{x_t, *}^2 \leq \|\ell_t\|_2^2 ~.$$
In particular, using Theorem \ref{th:lazyMD}, it only remains to check \eqref{eq:condFTRL}.
The next lemma is the key justification for our new regularizer.

\begin{lemma} \label{lem:critical}
Let $\Phi$ be the hybrid regularizer, $\eta>0$, $L \in \R^n$, $\xi \in \R$, $L':=L + \xi e_1$,
$$x := \argmin_{y \in \Delta} \eta L \cdot y + \Phi(y) \; \text{and} \; x' := \argmin_{y \in \Delta} \eta L' \cdot y + \Phi(y) ~.$$
Assuming that $|\xi| \leq C/x(1)$ for some $C>0$ and that $\gamma \geq \eta C$, one has for any $i \in [n]$, and any $u \in (0,1)$,
$$\max\left(\frac{x'(i)}{x(i)}, \frac{x(i)}{x'(i)} \right) \leq \max\left(\exp\left(\frac{1}{\frac{\gamma}{\eta C}-1}\right), \frac1{1-\gamma-u} \exp(\gamma n / u) \right) ~.$$
\end{lemma}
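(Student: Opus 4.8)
The plan is to exploit the first-order optimality (KKT) conditions for the two regularized problems and to track how a single-coordinate perturbation of the loss propagates to every coordinate of the minimizer. Because $\Phi$ contains the log-barrier $-\gamma\sum_i\log x(i)$, both $x$ and $x'$ lie in the interior of $\Delta$, so there are Lagrange multipliers $\nu,\nu'$ with $\nabla\Phi(x)(i)+\eta L(i)=\nu$ and $\nabla\Phi(x')(i)+\eta L'(i)=\nu'$ for all $i$, where $\nabla\Phi(x)(i)=\log x(i)+1-\gamma/x(i)$. Writing $g(t):=\log t-\gamma/t$ (strictly increasing, since $g'(t)=1/t+\gamma/t^2>0$) and $\Delta\nu:=\nu'-\nu$, subtracting the two conditions gives the coupled system
\[ g(x'(i))-g(x(i))=\Delta\nu \quad(i\neq1), \qquad g(x'(1))-g(x(1))=\Delta\nu-\eta\xi . \]
First I would record the sign structure: using $\sum_i x(i)=\sum_i x'(i)=1$ together with the monotonicity of $g$, the sign of $\Delta\nu$ matches the sign of $\xi$, and for $\xi>0$ one has $x'(1)<x(1)$ while $x'(i)>x(i)$ for every $i\neq1$ (symmetrically for $\xi<0$). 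It therefore suffices to treat $\xi>0$, the case $\xi<0$ following by exchanging the roles of $x$ and $x'$.

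For the coordinate $i=1$ (the first term in the $\max$) I would use only the $i=1$ equation. From $g(x(1))-g(x'(1))=\eta\xi-\Delta\nu\le\eta\xi\le\eta C/x(1)$ and the fact that $\log(x(1)/x'(1))\ge0$, discarding the logarithmic part of $g$ leaves $\gamma(1/x'(1)-1/x(1))\le \eta C/x(1)$; with $r:=x(1)/x'(1)$ this reads $\gamma(r-1)\le \eta C$. Hence $r\le 1+\eta C/\gamma\le \tfrac{1}{1-\eta C/\gamma}\le\exp\!\big(1/(\gamma/(\eta C)-1)\big)$, using $1+a\le\tfrac{1}{1-a}\le e^{a/(1-a)}$ with $a=\eta C/\gamma$. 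This is exactly the first term, and the $\xi<0$ case yields the same bound on $x'(1)/x(1)$.

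For a coordinate $i\neq1$ I would produce two complementary estimates and combine them. Dropping the (nonnegative) barrier part of the $i\neq1$ equation gives $\log(x'(i)/x(i))\le\Delta\nu$, hence $x'(i)/x(i)\le e^{\Delta\nu}$; and the $i=1$ equation forces $\Delta\nu<\eta\xi\le\gamma/x(1)$. Thus whenever $x(1)\ge u/n$ one already gets $x'(i)/x(i)\le e^{\gamma n/u}$. When instead $x(1)<u/n$ this bound is vacuous, and I would switch to mass conservation: since all coordinates $i\neq1$ move in the same direction, $\sum_{i\neq1}x(i)\big(x'(i)/x(i)-1\big)=x(1)-x'(1)<x(1)$, and (by monotonicity of $g$) the largest ratio is attained at the coordinate $j$ maximizing $x(j)=:M\ge (1-x(1))/(n-1)$, so $x'(j)/x(j)<1+x(1)/M$. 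Feeding in $x(1)<u/n$ bounds this by a quantity of the form $\tfrac{1}{1-\gamma-u}$ after a short calculation. Since for a fixed instance exactly one regime occurs and both candidate bounds exceed $1$, taking their maximum and then using $\max(a,b)\le ab$ gives $x'(i)/x(i)\le \tfrac{1}{1-\gamma-u}\exp(\gamma n/u)$, the second term; the reverse ratios in the $\xi<0$ case are handled identically.

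The main obstacle is the control of the multiplier shift $\Delta\nu$ in tandem with the small-mass regime $x(1)\lesssim 1/n$: there the estimate $\Delta\nu<\gamma/x(1)$ degrades and the exponential bound $e^{\Delta\nu}$ becomes useless, so one is forced to argue through mass conservation and to locate the worst coordinate (the one of largest probability, again via monotonicity of $g$). The free parameter $u$ is precisely what interpolates between these two regimes, and optimizing it is what will ultimately make the conditioning constant $c$ in \eqref{eq:condFTRL} close to $1$. The remaining work is bookkeeping: checking interiority so that the KKT conditions are available, and verifying the elementary inequalities used above, namely $1+\eta C/\gamma\le\exp\!\big(1/(\gamma/(\eta C)-1)\big)$ and the small-$x(1)$ reduction to $\tfrac{1}{1-\gamma-u}$.
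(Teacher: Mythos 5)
Your setup (KKT with multipliers, the monotone function $g(t)=\log t-\gamma/t$, the sign structure $0<\Delta\nu<\eta\xi$ forcing $x'(1)<x(1)$ and $x'(i)>x(i)$ for $i\neq1$, and the first-coordinate bound via $\gamma(1/x'(1)-1/x(1))\le\eta C/x(1)$) coincides with Steps 1--2 of the paper's proof and is correct. For $i\neq 1$ you then diverge from the paper: you split on $x(1)\ge u/n$ versus $x(1)<u/n$ and, in the small regime, use mass conservation plus the (correct, and nice) observation that the extremal ratio sits at the largest coordinate; the paper instead splits at $x(1)=\gamma-\eta C$ and runs a pigeonhole over the set $I$ of coordinates with $\min(x(i),x'(i))\ge u/n$, using the exact identity $|\log\frac{x(i)}{x'(i)}|+\gamma|\frac{1}{x'(i)}-\frac{1}{x(i)}|=|\lambda-\lambda'|$. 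For $\xi>0$ your route goes through cleanly and even gives the slightly better constant $\frac{1}{1-u}$.

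The genuine gap is the sentence ``the reverse ratios in the $\xi<0$ case are handled identically.'' They are not: for $\xi<0$ the mass that leaves the coordinates $i\neq1$ is $x'(1)-x(1)\le x'(1)$, so your mass-conservation bound becomes $1+\frac{x'(1)(n-1)}{1-x'(1)}$ and you need $x'(1)$, not $x(1)$, to be small --- but both the hypothesis $|\xi|\le C/x(1)$ and your case split $x(1)<u/n$ are phrased in terms of $x(1)$. The only available control is $x'(1)\le x(1)/(1-\eta C/\gamma)$ from your Step on coordinate $1$, which degrades as $\gamma\downarrow\eta C$ and, even at $\gamma=2\eta C$, turns your constant into roughly $\frac{1}{1-2u}$ rather than the claimed $\frac{1}{1-\gamma-u}$ (e.g.\ with $\gamma$ small and $u=0.1$ the chain ``case (b) $\le\frac{1}{1-\gamma-u}$'' fails numerically). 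This is exactly the difficulty the paper's threshold $x(1)\le\gamma-\eta C$ is engineered to absorb: it gives $x'(1)\le\frac{\gamma-\eta C}{1-\eta C/\gamma}=\gamma$, which is where the $\gamma$ in the denominator $1-\gamma-u$ comes from. Your argument is repairable --- insert the bound on $x'(1)$ and either accept a constant of the form $\frac{1}{1-u/(1-\eta C/\gamma)}$ (which still suffices for Theorem \ref{th:sparse}) or adopt the paper's threshold --- but as written the $\xi<0$ branch does not establish the stated inequality.
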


For example with $C=1$, $u=1/2$, $\gamma=2 \eta$, and $\eta \leq \frac1{15 n}$ one obtains
$$\max\left(\frac{x'(i)}{x(i)}, \frac{x(i)}{x'(i)} \right) \leq 3 ,$$
which means in particular (notice that $\nabla^2 \Phi(x) = \mathrm{diag}(1/x(i) + \gamma / x(i)^2)$) that for any $y_t \in [x_t, x_{t+1}]$ one has
$$\nabla^2 \Phi(x_t) \preceq 9 \nabla^2 \Phi(y_t) ~,$$
which finishes the proof of Theorem \ref{th:sparse} up to straightforward calculations.

\begin{proof}
First note that the KKT conditions for $x$ and $x'$ show that there exist $\lambda, \lambda' \in \R$ such that
\begin{equation} \label{eq:KKT}
\eta L + \nabla \Phi(x) = \lambda \ds1 , \; \eta L' + \nabla \Phi(x') = \lambda' \ds1 ~.
\end{equation}
Also note that $\nabla^2 \Phi(x)$ is diagonal with positive entries.
\newline

\noindent
\textbf{Step 1:} We show that $\lambda'$ and $x'(i)$ for $i \neq 1$ are increasing with $\xi$, while $x'(1)$ is decreasing with $\xi$. By differentiating \eqref{eq:KKT} one gets
\begin{equation} \label{eq:diffKKT}
\left. \frac{d \lambda'}{d \xi} \right. \ds1 = \eta e_1 + \nabla^2 \Phi(x) \left. \frac{d x'}{d \xi} \right. ~.
\end{equation}
By multiplying the above equation with $(\nabla^2 \Phi(x))^{-1}$ and summing over the coordinates (recall that $\sum_{i=1}^n \left. \frac{d x'(i)}{d \xi} \right. =0$) one obtains
$\left. \frac{d \lambda'}{d \xi} \right. > 0$.
In particular using this in \eqref{eq:diffKKT} one obtains for any $i \neq 1$,
$\frac{d x'(i)}{d\xi} > 0$,
and thus
$ \frac{d x'(1)}{d\xi} < 0$.
\newline

\noindent
\textbf{Step 2:} We now show that the first coordinate has a small multiplicative change. Substracting the two identities in \eqref{eq:KKT} one obtains, since $\nabla \Phi(x) = (1 + \log x(i) - \gamma / x(i))_{i \in [n]}$,
\begin{equation} \label{eq:KKT1}
\lambda' - \lambda + \log \frac{x(1)}{x'(1)} + \gamma \left( \frac{1}{x'(1)} - \frac{1}{x(1)}\right) = \eta \xi ~.
\end{equation}
Observe that that by Step 1 all the terms on the lhs have the same sign and thus
\begin{equation} \label{eq:XX}
|\lambda' - \lambda| + \left|\log \frac{x(1)}{x'(1)}\right| + \gamma \left| \frac{1}{x'(1)} - \frac{1}{x(1)}\right| = \eta | \xi | ~.
\end{equation}
In particular we have
$$\left| \frac{1}{x'(1)} - \frac{1}{x(1)}\right| \leq \frac{\eta C / \gamma}{x(1)} \Leftrightarrow \frac{x(1)}{x'(1)} \in [1-\eta C /\gamma, 1+\eta C / \gamma] ~.$$
Also note that that for any $s \in (0,1)$, $\max\left(1+s, \frac1{1-s} \right) \leq \exp\left(\frac1{\frac1{s}-1}\right)$.
\newline

\noindent
\textbf{Step 3:} Assuming that $x(1) \geq \gamma - \eta C$ we show that all the other coordinates also have a small multiplicative change (the case $x(1) < \gamma - \eta C$ is dealt with in the next step). Substracting the two identities in \eqref{eq:KKT} one obtains for any $i \neq 1$,
\begin{equation} \label{eq:KKTi}
\log \frac{x(i)}{x'(i)} + \gamma \left( \frac{1}{x'(i)} - \frac{1}{x(i)}\right) = \lambda - \lambda' ~.
\end{equation}
In particular since the two terms on the left hand side in \eqref{eq:KKTi} have the same sign one has
\begin{equation} \label{eq:diffL}
\left| \log \frac{x(i)}{x'(i)} \right| + \gamma  \left|\frac{1}{x'(i)} - \frac{1}{x(i)}\right| = |\lambda - \lambda'| ~.
\end{equation}
Next we also observe that thanks to \eqref{eq:XX}:
$$| \lambda - \lambda'| \leq \eta |\xi| \leq \frac{\eta C}{x(1)} ~.$$
In particular together with \eqref{eq:diffL} we proved that if $x(1) \geq \gamma - \eta C$ then one has
$$\left| \log \frac{x(i)}{x'(i)} \right| \leq \frac{1}{\frac{\gamma}{\eta C} - 1} ~.$$ 
\newline

\noindent
\textbf{Step 4:} Finally we show that if $x(1) \leq  \gamma - \eta C$ one also has that all the other coordinates have a small multiplicative change. Let $I:=\{i \neq 1 \; \text{s.t.} \; \min(x(i), x'(i)) \geq u/n \}$ (notice that, by Step 1, the minimum is attained uniformly either at $x$ or $x'$). Then thanks to \eqref{eq:diffL} one has for any $i \in I$,
$$\left| \log \frac{x(i)}{x'(i)} \right| \geq |\lambda - \lambda'| - \gamma n/u ~,$$
and thus
$$1 \geq \sum_{i \in I} \min(x(i), x'(i)) \exp(|\lambda - \lambda'| - \gamma n/u) ~.$$
Observe that if $\min(x(i),x'(i)) = x(i)$ for some $i \in I$ then one has 
$$\sum_{i \in I} \min(x(i), x'(i)) = \sum_{i \in I} x(i) \geq 1 - (\gamma - \eta C) - u ~,$$
while if $\min(x(i),x'(i)) = x'(i)$ for some $i \in I$ then one has (thanks to Step 2)
$$\sum_{i \in I} \min(x(i), x'(i)) = \sum_{i \in I} x'(i) \geq 1 - \frac{\gamma - \eta C}{1 - \frac{\eta C}{\gamma}} - u = 1-\gamma - u ~.$$
Thus we have 
$$1 \geq (1-\gamma - u) \exp(|\lambda - \lambda'| - \gamma n / u) ~,$$
which concludes the proof (recall that by \eqref{eq:diffL} one has for any $i \neq 1$, $\left| \log \frac{x(i)}{x'(i)} \right| \leq |\lambda - \lambda'|$).
\end{proof}

\subsection{Variation bound for multi-armed bandit}
We only give a brief sketch of proof of Theorem \ref{th:variance}, as it is essentially a straightforward combination of the proof of Theorem \ref{th:sparse} together with the arguments of \cite{HK09}. In particular we ignore explicit numerical constants with the notation $O$.

First note that it is easy to see from \eqref{eq:induction} that the following bound holds for full information FTRL under the well-conditioning assumption \eqref{eq:condFTRL}: for any sequence $m_1,\hdots, m_{T} \in \R^n$ and with $m_{T+1}=0$ one has
\begin{equation} \label{eq:HKregret}
\sum_{t=1}^T \ell_t \cdot (x_t - x) \leq \frac{\Phi(x) - \Phi(x_1)}{\eta} + \frac{2 \eta}{c} \sum_{t=1}^T 
\|\ell_t - m_t\|_{x_t,*}^2 +  \sum_{t=1}^{T+1} \|m_t - m_{t-1}\|_2 ~.
\end{equation}
The strategy of Hazan and Kale is to use a small portion of ``exploration'' rounds to estimate $\mu_t = \frac1{t} \sum_{s=1}^t \ell_s$ by some $\tilde{\mu}_t$ and then use it to center the loss estimator (for the non-``exploration'' rounds) by setting for any $i \in [n]$:
$$\tilde{\ell}_t(i) = \frac{(\ell_t - \tilde{\mu}_t)(i)}{x_t(i)} \ds1\{a_t = e_i\} + \tilde{\mu}_t(i) ~.$$
More precisely by doing an exploration round with probability $k n / t$ at round $t$ (the so-called ``reservoir sampling'', here $k>0$ is a parameter of the algorithm) one can obtain an estimator $\tilde{\mu}_t$ such that $\E \ \tilde{\mu}_t = \mu_t$ and $\mathrm{Var}(\tilde{\mu}_t) \leq \frac{Q}{k t}$. Moreover the added regret from those rounds is $O(k n \log(T))$. Thus using the bound \eqref{eq:HKregret} with $m_t = {\mu}_t$ it only remains to bound the terms $\eta \sum_{t=1}^T 
\|\tilde{\ell}_t - {\mu}_t\|_{x_t,*}^2$ and $\sum_{t=1}^{T+1} \|{\mu}_t - {\mu}_{t-1}\|_2$. The latter term is easily controlled by $O(\sqrt{n} \log(Q))$, see Lemma 12 in \cite{HK09}. On the other hand for the former term one gets
$$\E \ \|\tilde{\ell}_t - \mu_t\|_{x_t,*}^2  \leq 2 \E \ \|\tilde{\ell}_t - \tilde{\mu}_t\|_{x_t,*}^2 + 2 \E \ \|\tilde{\mu}_t - \mu_t\|_{x_t,*}^2 = 2 \E \|\ell_t - \mu_t\|_2^2 + 2 \mathrm{Var}(\tilde{\mu}_t) ~,$$
and thus $\eta \E \sum_{t=1}^T 
\|\tilde{\ell}_t - {\mu}_t\|_{x_t,*}^2 = O(\eta Q (1+\log(T)/k))$, which easily concludes the proof up to straigthforward computations.

\section{Regular and starved linear bandits on $\ell_p^n$ balls}

In this section we prove the results related to linear bandits on $\ell_p^n$ balls. Recall that $q=p/(p-1)$.

\subsection{Proof of Theorem \ref{th:UBellp}}
Let $p \in (1,2]$. We first describe a new strategy to play on $\ell_p^n$ balls based on a non-self-concordant barrier (when $p \neq 2$).
Let $d(x)=1-\|x\|_p^p$, and $\Phi(x) = - \log d(x)$ (notice that for $p\neq2$ the Hessian of $\Phi$ blows up at $0$, and thus $\Phi$ cannot be self-concordant).
We play FTRL with regularizer $\Phi$ and with sampling scheme given by: with probability $\max(d(x), \gamma)$ play uniformly in $\{e_1, -e_1, \hdots, e_n, -e_n\}$, and otherwise play $x/\|x\|_p$. Note that this not unbiased, but rather ``$\gamma$-biased'', which adds a $\gamma T$ term to the regret. The estimator is defined by $\tilde{\ell}_t = n \frac{\ell_t \cdot \tilde{x}_t}{1-\|x_t\|_p, \gamma)} \tilde{x}_t$ if played uniformly in $\{e_1, -e_1, \hdots, e_n, -e_n\}$, and $\tilde{\ell}_t = 0$ otherwise. 
\newline

While $\Phi$ is not self-concordant, the next lemma shows that one still has some form of well-conditioning (though not \eqref{eq:condMD}) that will turn out to be sufficient to control the regret.

\begin{lemma} \label{lem:locnormellp}
Let $x, \ell \in \R^n$ such that $\|x\|_p < 1$, $\|\ell\|_0 =1$ and $\|\ell\|_2 \leq 1$. Let $y \in \R^n$ such that $\nabla \Phi(y) \in [\nabla \Phi(x), \nabla \Phi(x) + \ell]$. Then one has for $p\in [1,2]$,
$$\|\ell\|_{y,*}^2 \leq \frac{2^{\frac{3}{p-1}} d(x)}{p(p-1)} \sum_{i=1}^n (|x(i)|^{2-p} + |\ell(i)|^{\frac{2-p}{p-1}}) \ell(i)^2 ~.$$
\end{lemma}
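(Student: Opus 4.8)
The plan is to reduce the dual local norm to a single clean scalar quantity by discarding the rank-one part of the Hessian, and then to control how $y$ differs from $x$ along the gradient segment, splitting into two cases according to whether the iterate moves toward or away from the boundary. First I would compute, for $\Phi = -\log d$ with $d(x)=1-\|x\|_p^p$, that $\nabla\Phi(x)_i = \frac{p\,|x(i)|^{p-1}\mathrm{sgn}(x(i))}{d(x)}$ and $\nabla^2\Phi(x) = \frac{p(p-1)}{d(x)}\mathrm{diag}(|x(i)|^{p-2}) + \frac{p^2}{d(x)^2}vv^\top$ with $v_i=|x(i)|^{p-1}\mathrm{sgn}(x(i))$. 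Dropping the PSD rank-one term gives $\nabla^2\Phi(y)\succeq \frac{p(p-1)}{d(y)}\mathrm{diag}(|y(i)|^{2-p})^{-1}$, whence $(\nabla^2\Phi(y))^{-1}\preceq \frac{d(y)}{p(p-1)}\mathrm{diag}(|y(i)|^{2-p})$. Since $\ell$ is supported on a single coordinate $j$, this yields the crude bound $\|\ell\|_{y,*}^2 \le \frac{d(y)}{p(p-1)}|y(j)|^{2-p}\ell(j)^2$, so it suffices to prove $d(y)\,|y(j)|^{2-p}\le 2^{3/(p-1)}d(x)\big(|x(j)|^{2-p}+|\ell(j)|^{(2-p)/(p-1)}\big)$. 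One might fear this discards too much (the rank-one correction is genuinely large exactly when coordinate $j$ carries almost all the mass), but that regime turns out to be tamed by the control on $d(y)/d(x)$ below.

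Next I would use that $\nabla\Phi(y)=\nabla\Phi(x)+\theta\ell$ for some $\theta\in[0,1]$. For $i\ne j$ the gradient is unchanged, which forces $\mathrm{sgn}(y(i))=\mathrm{sgn}(x(i))$ and $|y(i)|=r\,|x(i)|$ with $r:=(d(y)/d(x))^{1/(p-1)}$. On coordinate $j$, taking absolute values in the gradient identity gives $|y(j)|^{p-1}=d(y)P$, where $\frac{|x(j)|^{p-1}}{d(x)}-\frac{|\ell(j)|}{p}\le P\le \frac{|x(j)|^{p-1}}{d(x)}+\frac{|\ell(j)|}{p}$. I would then split on whether $r\le 1$ (i.e. $d(y)\le d(x)$) or $r>1$. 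In the first case $|y(j)|^{p-1}=d(y)P\le |x(j)|^{p-1}+\frac{d(x)|\ell(j)|}{p}$; raising to the power $(2-p)/(p-1)$, using $(A+B)^s\le 2^s(A^s+B^s)$ and $d(x)\le 1$, gives the claim even with the smaller constant $2^{(2-p)/(p-1)}\le 2^{3/(p-1)}$.

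The case $r>1$ is the crux. Writing $S=\sum_{i\ne j}|x(i)|^p$ and substituting $|y(i)|=r|x(i)|$ into $d(y)=1-r^pS-|y(j)|^p$, the inequality $r^p\ge 1$ gives $d(y)-d(x)\le |x(j)|^p-|y(j)|^p$; hence $|y(j)|<|x(j)|$, and combining the elementary comparison $a^p-b^p\le \frac{p}{p-1}(a^{p-1}-b^{p-1})$ for $0\le b\le a\le 1$ (which follows from $s^{p-1}\le s^{p-2}$ on $(0,1]$) with the lower bound on $P$ (which yields $|x(j)|^{p-1}-|y(j)|^{p-1}\le \frac{d(x)|\ell(j)|}{p}$) produces $d(y)\le \frac{p}{p-1}d(x)$. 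Together with $|y(j)|<|x(j)|$ this bounds $d(y)|y(j)|^{2-p}\le \frac{p}{p-1}d(x)|x(j)|^{2-p}$, and the proof closes through the elementary inequality $\frac{p}{p-1}\le 2^{3/(p-1)}$ (immediate from $e^z\ge 1+z$ with $z=(3\ln 2)/(p-1)$, using $3\ln 2>1$). The main obstacle is precisely this last estimate: showing that the slack $d(y)$ cannot inflate by more than the factor $\frac{p}{p-1}$ when the loss pushes the iterate toward the boundary. This is exactly where the single-coordinate support of $\ell$ and the mean-value comparison between $a^p-b^p$ and $a^{p-1}-b^{p-1}$ are essential; the rest is bookkeeping.
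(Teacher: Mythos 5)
Your proof is correct, and while it shares the paper's opening move --- discarding the rank-one part of $\nabla^2\Phi$ to get $(\nabla^2\Phi(y))^{-1}\preceq \frac{d(y)}{p(p-1)}\mathrm{diag}(|y|^{2-p})$, which is exactly Lemma \ref{lem:hessellp} --- the heart of the argument is genuinely different from the paper's. The paper (Lemma \ref{lem:ellpcritical}) first proves the unconditional bound $d(y)\le 4\,d(x)$ by splitting on whether the perturbed coordinate is ``light'' ($|x(j)|^p\le 1/2$, so the untouched coordinates, which all scale by the common factor $r=(d(y)/d(x))^{1/(p-1)}$, carry enough $\ell_p$ mass for $\|y\|_p\le 1$ to force $d(y)/d(x)\le 4$) or ``heavy'' ($|v(j)|\ge 2$, so the $j$-th gradient coordinate changes by at most a factor $2$), and then converts this into the coordinate-wise estimate $|y(i)|\le 2^{3/(p-1)}|x(i)|+|2\ell(i)|^{1/(p-1)}$. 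You instead split on the sign of $d(y)-d(x)$: the shrinking case is immediate, and in the expanding case you exploit $d(y)-d(x)\le |x(j)|^p-|y(j)|^p$ together with the mean-value comparison $a^p-b^p\le\frac{p}{p-1}(a^{p-1}-b^{p-1})$ and the gradient identity on coordinate $j$ to obtain $d(y)\le\frac{p}{p-1}d(x)$ \emph{and} $|y(j)|\le|x(j)|$ simultaneously. Your route is more streamlined for the single-support case and avoids the light/heavy dichotomy; its multiplicative control on $d$ is sharper near $p=2$ (factor $2$ vs.\ $4$) but weaker near $p=1$ (where $\frac{p}{p-1}$ exceeds $4$), which is harmless since the target constant $2^{3/(p-1)}$ blows up faster, as your check $2^{3/(p-1)}\ge 1+\frac{3\ln 2}{p-1}\ge\frac{p}{p-1}$ confirms. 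One small point worth recording explicitly in a write-up: in the expanding case the lower bound $P\ge\frac{|x(j)|^{p-1}}{d(x)}-\frac{|\ell(j)|}{p}$ may be negative, in which case $|x(j)|^{p-1}-|y(j)|^{p-1}\le\frac{d(x)|\ell(j)|}{p}$ holds trivially rather than by multiplying through by $d(y)\ge d(x)$; your conclusion is unaffected.
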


Before moving to the proof of Lemma \ref{lem:locnormellp} we show how to use it to control the variance of the loss estimator. The proof of Theorem \ref{th:UBellp} is then straightforward from \eqref{eq:basicMD} and Lemma \ref{lem:localnormbound}.

\begin{lemma} \label{lem:localnormbound}
The above strategy satisfies for any $y_t \in \R^n$ such that $\nabla \Phi(y_t) \in [\nabla \Phi(x_t), \nabla \Phi(x) - \eta \tilde{\ell}_t]$
$$\E_{a_t} \|\tilde{\ell}_t\|_{y_t,*}^2 \leq \frac{2^{\frac{4}{p-1}}}{p-1} n ~.$$
\end{lemma}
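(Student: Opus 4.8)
The plan is to reduce everything to Lemma \ref{lem:locnormellp}, exploiting that the loss estimator is supported on a single coordinate. Write $\rho := \max(d(x_t),\gamma)$ for the exploration probability. On an exploration round the player samples one of the $2n$ signed basis vectors uniformly, and a direct check shows that both $+e_i$ and $-e_i$ produce the \emph{same} estimator $\tilde\ell_t = \frac{n\,\ell_t(i)}{\rho}\,e_i$; on a non-exploration round $\tilde\ell_t = 0$ and contributes nothing. Grouping the two signs (each of probability $\rho/(2n)$),
\[
\E_{a_t}\|\tilde\ell_t\|_{y_t,*}^2 = \sum_{i=1}^n \frac{\rho}{n}\,\big\|\tfrac{n\ell_t(i)}{\rho}\,e_i\big\|_{y_t^{(i)},*}^2 ,
\]
where $y_t^{(i)}$ is the point associated (via the hypothesis of the lemma) with the outcome that coordinate $i$ is explored.

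First I would apply Lemma \ref{lem:locnormellp} with $x = x_t$ and $\ell = -\eta\tilde\ell_t$, a vector supported on the single coordinate $i$ with $|\ell(i)| = \eta n|\ell_t(i)|/\rho$. Its hypothesis $\|\ell\|_2 \le 1$ holds precisely because the parameters are chosen so that $\gamma \ge \eta n$ (and $|\ell_t(i)| \le 1$, since $\|\ell_t\|_q \le 1$), giving $|\ell(i)| \le \eta n/\rho \le 1$. Since $\|\ell\|_{y,*}^2 = \eta^2\|\tilde\ell_t\|_{y,*}^2$ and the sum in the lemma collapses to its single nonzero term, dividing by $\eta^2$ and reinserting the sampling weight $\rho/n$ makes the factor $\rho/n$ combine with the $n^2\ell_t(i)^2/\rho^2$ from the squared magnitude of $\tilde\ell_t$; together with $d(x_t)/\rho \le 1$ this leaves a single factor of $n$, so that
\[
\E_{a_t}\|\tilde\ell_t\|_{y_t,*}^2 \;\le\; \frac{2^{\frac{3}{p-1}}}{p(p-1)}\,n\sum_{i=1}^n\Big(|x_t(i)|^{2-p} + |\ell(i)|^{\frac{2-p}{p-1}}\Big)\ell_t(i)^2 .
\]

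It then remains to bound $S := \sum_i\big(|x_t(i)|^{2-p} + |\ell(i)|^{\frac{2-p}{p-1}}\big)\ell_t(i)^2$ by a constant, which I would split into two pieces. For the first I apply H\"older's inequality with the conjugate exponents $r = p/(2-p)$ and $r' = p/(2(p-1))$, tuned so that $(2-p)r = p$ and $2r' = q$; this yields $\sum_i |x_t(i)|^{2-p}\ell_t(i)^2 \le \|x_t\|_p^{2-p}\,\|\ell_t\|_q^2 \le 1$. For the second, substituting $|\ell(i)| = \eta n|\ell_t(i)|/\rho$ and using the identity $\frac{2-p}{p-1} + 2 = q$ turns it into $(\eta n/\rho)^{\frac{2-p}{p-1}}\sum_i|\ell_t(i)|^q = (\eta n/\rho)^{\frac{2-p}{p-1}}\|\ell_t\|_q^q \le 1$, again because $\gamma \ge \eta n$ forces $\eta n/\rho \le 1$ and $\|\ell_t\|_q \le 1$. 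Hence $S \le 2$, and the bound becomes $\frac{2^{3/(p-1)+1}}{p(p-1)}\,n$; finally $\frac{1}{p(p-1)} \le \frac{1}{p-1}$ (since $p>1$) and $3/(p-1)+1 \le 4/(p-1)$ (since $p\le 2$) give the claimed $\frac{2^{4/(p-1)}}{p-1}\,n$.

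The estimator bookkeeping and the cancellation of $n$, $\rho$, and $\eta$ are routine; the only genuinely delicate step is choosing the two H\"older exponents and verifying the identities $(2-p)r = p$, $2r' = q$, and $\frac{2-p}{p-1} + 2 = q$ so that both pieces of $S$ collapse exactly onto the unit-ball constraints $\|x_t\|_p \le 1$ and $\|\ell_t\|_q \le 1$. One should also track carefully the two distinct roles of the condition $\gamma \ge \eta n$: it legitimizes the application of Lemma \ref{lem:locnormellp} (the hypothesis $\|\ell\|_2 \le 1$) and it controls the extra $(\eta n/\rho)$ factor appearing in the second term.
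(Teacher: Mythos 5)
Your proof is correct and follows essentially the same route as the paper's: apply Lemma \ref{lem:locnormellp} to the rank-one perturbation $\eta\tilde\ell_t$, take the expectation over the sampling scheme, and bound the two resulting sums via H\"older's inequality with $\tfrac{2}{q}+\tfrac{2-p}{p}=1$ for the $|x_t(i)|^{2-p}$ term and the identity $\tfrac{2-p}{p-1}+2=q$ together with $\eta n/\gamma\le 1$ for the other. The only (immaterial) divergence is in the bookkeeping of the estimator's normalization, where the paper uses $d(x_t)\le p\max(1-\|x_t\|_p,\gamma)$ while you use $d(x_t)\le\max(d(x_t),\gamma)$; both yield constants absorbed by the final $2^{4/(p-1)}/(p-1)$ slack.
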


\begin{proof}
Note that $\|\eta \tilde{\ell}_t\|_2 \leq n \eta/\gamma$. Thus by Lemma \ref{lem:locnormellp} we have, provided that $\gamma \geq n \eta$,
$$\|\tilde{\ell}_t\|_{y_t,*}^2 \leq \frac{2^{\frac{3}{p-1}} d(x_t)}{p(p-1)} \E \sum_{i=1}^n (|x_t(i)|^{2-p} + |\eta \tilde{\ell}_t(i)|^{\frac{2-p}{p-1}}) \tilde{\ell}_t(i)^2 ~.$$
We now bound separately the two terms. For the first one we have (note that $1-\|x\|_p \geq \frac{1}{p} (1-\|x\|_p^p)$ and thus $d(x_t) \leq p \max(1-\|x_t\|_p, \gamma)$)
$$d(x_t) \E_{a_t} \sum_{i=1}^n |x_t(i)|^{2-p} \tilde{\ell}_t(i)^2 \leq p n \sum_{i=1}^n |x_t(i)|^{2-p} \ell_t(i)^2  \leq p n ~,$$
where the second inequality follows from Holder's inequality with $\frac{2}{q}+ \frac{2-p}{p}=1$. Now we bound the second term (note that $\frac{2-p}{p-1}+2=q$)
$$d(x_t) \E_{a_t} \sum_{i=1}^n  |\eta \tilde{\ell}_t(i)|^{\frac{2-p}{p-1}} \tilde{\ell}_t(i)^2 \leq p n \sum_{i=1}^n  |\ell_t(i) \eta n / \gamma|^{\frac{2-p}{p-1}} {\ell}_t(i)^2 \leq p n \sum_{i=1}^n \ell_t(i)^q \leq p n ~,$$
which concludes the proof.
\end{proof}

We give now a few preliminary results before proving Lemma \ref{lem:locnormellp}.

\begin{lemma} \label{lem:hessellp}
One has for any $x \in \R^n$ such that $\|x\|_p < 1$,
$$\nabla^2 \Phi^*(\nabla \Phi(x)) \preceq \frac{d(x)}{p(p-1)} \textsf{diag}(|x|^{2-p}) ~.$$
\end{lemma}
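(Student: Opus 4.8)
The plan is to compute the primal Hessian $\nabla^2\Phi(x)$ in closed form, throw away a positive semidefinite rank-one piece to get a clean diagonal lower bound, and then invert to obtain the claimed upper bound on the dual Hessian. The key identity tying the two sides together is the Legendre-duality relation $\nabla^2\Phi^*(\nabla\Phi(x)) = (\nabla^2\Phi(x))^{-1}$, valid since $\Phi=-\log d$ is a strictly convex barrier on $\{\|x\|_p<1\}$ and hence of Legendre type.

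First I would differentiate $d(x) = 1-\|x\|_p^p = 1-\sum_i |x(i)|^p$. Because $d$ is separable its Hessian is diagonal, with $\partial_i^2 d = -p(p-1)|x(i)|^{p-2}$, i.e. $\nabla^2 d(x) = -p(p-1)\,\textsf{diag}(|x(i)|^{p-2})$. From $\Phi=-\log d$ one gets $\nabla\Phi = -\nabla d/d$ and, differentiating once more,
$$\nabla^2\Phi(x) = -\frac{\nabla^2 d(x)}{d(x)} + \frac{\nabla d(x)\,\nabla d(x)^{\top}}{d(x)^2} = \frac{p(p-1)}{d(x)}\,\textsf{diag}\big(|x(i)|^{p-2}\big) + \frac{\nabla d(x)\,\nabla d(x)^{\top}}{d(x)^2} ~.$$
Since the final rank-one term is positive semidefinite it can simply be dropped, giving
$$\nabla^2\Phi(x) \succeq \frac{p(p-1)}{d(x)}\,\textsf{diag}\big(|x(i)|^{p-2}\big) ~.$$
Now I apply the duality identity together with the order-reversing property of matrix inversion on the positive definite cone ($A\succeq B\succ 0 \Rightarrow B^{-1}\succeq A^{-1}$) to the displayed lower bound, obtaining
$$\nabla^2\Phi^*(\nabla\Phi(x)) = \big(\nabla^2\Phi(x)\big)^{-1} \preceq \frac{d(x)}{p(p-1)}\,\textsf{diag}\big(|x(i)|^{2-p}\big) ~,$$
which is exactly the statement of the lemma.

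The only real subtlety, rather than an obstacle, is the behaviour at coordinates with $x(i)=0$ when $p<2$: there $\partial_i^2 d$ blows up (this is precisely the non-self-concordance of $\Phi$ noted just before the lemma), so $\nabla^2\Phi$ is genuinely defined only on the open set where every coordinate is nonzero. I would therefore establish the inequality there and note that it extends by continuity, since the right-hand factor $|x(i)|^{2-p}\to 0$ as $x(i)\to 0$ (the direction $e_i$ acquiring infinite primal curvature and hence vanishing dual curvature). Everything else is the elementary computation above, so I expect no genuine difficulty beyond bookkeeping the derivatives of $d$ and correctly invoking the Legendre identity.
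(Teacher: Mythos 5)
Your proof is correct and follows essentially the same route as the paper: compute $\nabla^2\Phi(x)$, drop the positive semidefinite rank-one term to get the diagonal lower bound $\frac{p(p-1)}{d(x)}\textsf{diag}(|x|^{p-2})$, and invert via the Legendre identity. The paper leaves the inversion step and the $x(i)=0$ degeneracy implicit, so your write-up is if anything slightly more careful.
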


\begin{proof}
Straightforward derivations show that
\begin{equation} \label{eq:gradellp}
\nabla \Phi(x) = \frac{ p \cdot \textsf{sign}(x) \odot |x|^{p - 1}}{1 - \| x\|_p^p} ~,
\end{equation}
\begin{align*}
\nabla^2 \Phi(x) &= \frac{p(p - 1) \textsf{diag}(|x|^{p - 2})}{1 - \| x \|_p^p} + \frac{p^2 \left(\textsf{sign}(x) \odot |x|^{p - 1} \right)^{\otimes 2} }{ (1 - \| x\|_p^p)^2}
\\
& \succeq  \frac{p(p - 1) \textsf{diag}(|x|^{p - 2})}{1 - \| x \|_p^p} ~,
\end{align*}
which directly implies the lemma.
\end{proof}

\begin{lemma} \label{lem:ellpcritical}
Let $v \in \R^n$ and $\ell \in \R^n$ such that $\|\ell\|_0 =1$ and $\|\ell\|_2 \leq 1$. Denote $x=\nabla \Phi^*(v)$ and $y=\nabla \Phi^*(v+\ell)$. Then one has
\begin{align}
& d(y) \leq 4 d(x) ~, \label{eq:stayclosetobdy} \\
& |y(i)|\leq 2^{\frac{3}{p-1}} |x(i)| + |2 \ell(i)|^{\frac{1}{p-1}} \label{eq:coordwisebd} ~.
\end{align}
\end{lemma}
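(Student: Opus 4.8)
The plan is to reduce everything to a single scalar relation between $d(x)$ and the dual norm $\|v\|_q$, and then to track how that relation moves under a one-coordinate perturbation of $v$. First I would invert the gradient formula \eqref{eq:gradellp}: from $\nabla\Phi(x)=v$ one reads off $\textsf{sign}(x(i))=\textsf{sign}(v(i))$ and $|x(i)|=(|v(i)|\,d(x)/p)^{1/(p-1)}$, so that raising to the $p$-th power and summing (using $p/(p-1)=q$) gives
\[
1-d(x)=\|x\|_p^p=\left(\tfrac{d(x)}{p}\right)^{q}\|v\|_q^{q}.
\]
The identical relation holds for $y$ with $v$ replaced by $v+\ell$. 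Since $\|\ell\|_0=1$, let $j$ be the unique coordinate with $\ell(j)\neq 0$. For $i\neq j$ the dual coordinates coincide, which forces $|y(i)|=|x(i)|\,r^{1/(p-1)}$ where $r:=d(y)/d(x)$, while the $j$-th KKT relation reads $\textsf{sign}(y(j))|y(j)|^{p-1}/d(y)=\textsf{sign}(x(j))|x(j)|^{p-1}/d(x)+\ell(j)/p$.

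To prove \eqref{eq:stayclosetobdy} I set $V:=\|v\|_q$, $W:=\|v+\ell\|_q$ and show $r\le 4$. If $d(x)\ge 1/4$ this is immediate, since $d(y)\le 1$ gives $r\le 1/d(x)\le 4$. Otherwise I divide the two scalar identities to obtain $r^{q}(W/V)^{q}=(1-d(y))/(1-d(x))$; when $r>1$ (the case $r\le 1$ being trivial) the right-hand side is $<1$, so $r<V/W$, and it remains to bound $V/W$ (note $W\ge V-\|\ell\|_q>2>0$, since $d(x)<1/4$ forces $V>3$ via the scalar identity, so the division is legitimate). Here I decompose $V^{q}=R^{q}+|v(j)|^{q}$ and $W^{q}=R^{q}+|v(j)+\ell(j)|^{q}$ with $R^{q}:=\sum_{i\ne j}|v(i)|^{q}$, and split into two regimes. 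If $|v(j)|^{q}\le V^{q}/2$ then $W^{q}\ge R^{q}\ge V^{q}/2$, so $(V/W)^{q}\le 2$. If instead $|v(j)|^{q}>V^{q}/2$ then $|v(j)|>V\,2^{-1/q}>1$, so $|v(j)+\ell(j)|\ge |v(j)|-1$, and the elementary inequality $(R^{q}+a^{q})/(R^{q}+(a-1)^{q})\le (a/(a-1))^{q}$ yields $V/W\le |v(j)|/(|v(j)|-1)<3$. In either regime $V/W\le 4$.

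For \eqref{eq:coordwisebd} the coordinates $i\neq j$ are free: $|y(i)|=|x(i)|r^{1/(p-1)}\le 4^{1/(p-1)}|x(i)|\le 2^{3/(p-1)}|x(i)|$ by \eqref{eq:stayclosetobdy}, and $\ell(i)=0$ there. For $i=j$, taking absolute values in the $j$-th KKT relation and using $d(y)\le 4d(x)$, $d(y)\le 1$ and $p\ge 1$ gives $|y(j)|^{p-1}\le 4|x(j)|^{p-1}+|\ell(j)|$. I would then raise to the power $s:=1/(p-1)\ge 1$ (valid since $p\le 2$) and apply the convexity bound $(A+B)^{s}\le 2^{s-1}(A^{s}+B^{s})$; this produces $2^{3s-1}|x(j)|+2^{s-1}|\ell(j)|^{s}$, which is $\le 2^{3/(p-1)}|x(j)|+|2\ell(j)|^{1/(p-1)}$ after absorbing the powers of $2$.

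The hard part is \eqref{eq:stayclosetobdy}: one must bound $V/W$ when the perturbation concentrates on a single dual coordinate, precisely where the crude bound $W\ge R$ is useless. The whole point of the argument is the interplay of the two regimes — either the perturbed coordinate carries little of the dual mass (so $W$ is controlled by $R$), or the constraint $d(x)<1/4$ forces $\|v\|_q$, and hence $|v(j)|$, large enough that the sign-reduction bound $|v(j)+\ell(j)|\ge|v(j)|-1$ becomes effective. Everything else is a routine inversion of \eqref{eq:gradellp} together with a convexity estimate, so I expect no further difficulty there.
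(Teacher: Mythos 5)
Your proof is correct and follows essentially the same route as the paper's: the crux is the identical light/heavy dichotomy on the perturbed coordinate, which you phrase in the dual variable ($|v(j)|^q$ versus $\|v\|_q^q/2$, via the scalar identity $1-d(x)=(d(x)/p)^q\|v\|_q^q$) where the paper phrases it in the primal ($|x(1)|^p$ versus $1/2$, using $\|y\|_p\le 1$) --- the two splits coincide up to constants since $|x(j)|^p/\|x\|_p^p=|v(j)|^q/\|v\|_q^q$. Your treatment of \eqref{eq:coordwisebd} at the perturbed coordinate, via convexity of $t\mapsto t^{1/(p-1)}$, is a routine variant of the paper's $2\max(|v(i)|,|\ell(i)|)$ trick, and all the constants check out.
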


\begin{proof}
Observe that by definition (recall \eqref{eq:gradellp}) one has
$$|x(i)| = \left( \frac{ |v(i)| d(x)}{p} \right)^{\frac{1}{p - 1}}, \quad |y(i)| =  \left( \frac{ |v(i) + \ell(i)| d(y)}{p} \right)^{\frac{1}{p - 1}} ~.$$
In particular we immediately see that \eqref{eq:stayclosetobdy} implies \eqref{eq:coordwisebd} by the triangle inequality (also $d(y) \leq 1$ and $p \geq 1$) as follows:
\begin{eqnarray*}
|y(i)| = \left( \frac{ |v(i) + \ell(i)| d(y)}{p} \right)^{\frac{1}{p - 1}} & \leq & \left( \frac{ 2\max(|v(i)|, |\ell(i)|) d(y)}{p} \right)^{\frac{1}{p - 1}} \\
& \leq & \max\left(\left(\frac{2 d(y)}{d(x)}\right)^{\frac{1}{p-1}} |x(i)|, |2 \ell(i)|^{\frac1{p-1}}\right) \\
& \leq & 8^{\frac{1}{p-1}} |x(i)| + |2 \ell(i)|^{\frac{1}{p-1}} ~.
\end{eqnarray*}
We now move to the proof of \eqref{eq:stayclosetobdy}. We first note that \eqref{eq:stayclosetobdy} is trivially true for $d(x) \geq 1/4$ and thus without loss of generality one can assume $\|x\|_p^p \geq 3/4$. Crucially we now consider two cases, depending on whether the non-zero coordinate of $\ell$ is a ``light'' or ``heavy'' coordinate in $x$. Let us assume $\ell(1) \neq 0$. If $x(1)\leq (1/2)^{1/p}$ (i.e., ``light'') then $\sum_{i \geq 2} |x(i)|^p \geq 1/4$ and thus
$$\|y\|_p^p \geq \sum_{i \geq 2} |y(i)|^p = \sum_{i\geq 2} |x(i)|^p \left(\frac{d(y)}{d(x)}\right)^{\frac{p}{p-1}} \geq \frac1{4} \left(\frac{d(y)}{d(x)}\right)^{\frac{p}{p-1}} ~,$$
which implies $d(y) \leq 4 d(x)$ (since $\|y\|_p \leq 1$). On the other hand if $x(1) \geq (1/2)^{1/p}$ (i.e., ``heavy'') then one has
$$|v(1)| =  \frac{p}{d(x)} |x(1)|^{p-1} \geq 2 ~,$$
and thus $|v(1) + \ell(1)| \geq \frac12 |v(1)|$ (since $|\ell(1)| \leq 1$) which implies
$$1 \geq |y(1)| \geq |x(1)| \left(\frac{d(y)}{2 d(x)}\right)^{\frac{1}{p-1}} \geq \left(\frac{d(y)}{4 d(x)}\right)^{\frac{1}{p-1}} ~.$$
\end{proof}

Finally we have:

\begin{proof}[of Lemma \ref{lem:locnormellp}]
Using successively Lemma \ref{lem:hessellp}, \eqref{eq:stayclosetobdy}, \eqref{eq:coordwisebd}, and the fact that $p \in [1,2]$, one has
\begin{eqnarray*}
\|\ell\|_{y,*}^2 \leq \frac{d(y)}{p(p-1)} \sum_{i=1}^n |y(i)|^{2-p} \ell(i)^2 & \leq & \frac{4 d(x)}{p(p-1)} \sum_{i=1}^n |y(i)|^{2-p} \ell(i)^2 \\ 
& \leq & \frac{4 d(x)}{p(p-1)} \sum_{i=1}^n (2^{\frac{3}{p-1}} |x(i)| + |2 \ell(i)|^{\frac{1}{p-1}})^{2-p} \ell(i)^2 \\
& \leq & \frac{2^{\frac{3}{p-1}} d(x)}{p(p-1)} \sum_{i=1}^n (|x(i)|^{2-p} + |\ell(i)|^{\frac{2-p}{p-1}}) \ell(i)^2 ~.
\end{eqnarray*}
\end{proof}

\subsection{Proof of Theorem \ref{th:LBellp}} \label{sec:proofLBellp}

For sake of clarity we write $\cK = \{(x,y) \in \R \times \R^n : |x|^p + \|y\|_p^p \leq 1\}$ and the losses as $\ell_t = (w_t, z_t) \in \R \times \R^n$. 
Let $\epsilon >0$ to be such that $\epsilon^q = C/\sqrt{T}$ for some small enough universal constant $C\in (0,1)$ (in particular since $T>n^2$ one has $\epsilon^q n < 1$). We now define i.i.d. Gaussian losses as follows. For $\xi \in \{-1,1\}^n$ let $\ell_t^{\xi}=(w_t, z_t^{\xi})$ where $w_t \sim \cN(-1,1)$ and $z_t^{\xi} \sim \cN(\epsilon \xi, \frac{1}{n^{2/q}} I_n)$. We show that
$$\E_{\xi} \E_{\ell_t^{\xi}} R_T =  \Omega(n \sqrt{T}) ~,$$
which clearly concludes the proof (notice since $T>n^2$ one has $\E \|\ell_t\|_q^q = O(1)$ and thus by rescaling by a constant one can also get \eqref{eq:gaussianbound}).
\newline

The key idea of the proof is to distinguish between ``exploration rounds'' and ``exploitation rounds'', depending on whether the played action $(x_t, y_t) \in \cK$ satisfies $x_t \leq 1/4$ or $x_t \geq 1/4$. Exploration rounds suffer constant regret because the optimal action $(x^*, y^*)$ has $x^*$ close to $1$. On the other hand exploitation rounds give little information about $\xi$ because of the constant variance induced by the $x$ component. Furthermore low-regret exploitation rounds should actually have the $x$ component close to $1$ which means that even less information about $\xi$ is gathered. We make this tradeoff more precise below, but first in Lemma \ref{step1} we formalize the fact that identifying $\xi$ matters for low-regret and in Lemma \ref{step2} we formalize the previous sentence.

Let us define $(\bar{x}, \bar{y}) = \frac{1}{T}\sum_{t = 1}^T \E[(x_t, y_t)]$ and $(x^*,y^*) = \argmin_{(x,y) \in \cK} x + \epsilon \xi \cdot y$. In particular one has
\begin{equation} \label{eq:lastday}
\E_{\ell_t^{\xi}} \frac{R_T}{T} \geq - (\bar{x} - x) + \epsilon \xi \cdot (\bar{y}-y^*) ~.
\end{equation}
We say a coordinate $i \in [n]$ is wrong if $\bar{y}(i) \xi(i) \geq 0$. 

\begin{lemma} \label{step1} 
Let $s$ be the number of wrong coordinates, then $\E_{\ell_t^{\xi}} R_T \geq \epsilon^q sT/ 4$. 
\end{lemma}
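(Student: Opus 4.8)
The plan is to lower bound the regret by the suboptimality of the algorithm's average play against the fixed linear objective, exploiting that each wrong coordinate forces the average play to forgo a reward of order $\epsilon^q$ that the optimal comparator collects. Starting from \eqref{eq:lastday}, I would first make the optimal comparator explicit: the minimizer of $-x + \epsilon \xi \cdot y$ over $\cK$ is $x^* = (1 + n \epsilon^q)^{-1/p}$ and $y^*(i) = -\epsilon^{q-1} x^* \xi(i)$ (by symmetry take $y^*(i) = -\xi(i) b$ and optimize the scalar $b$ by Lagrange/Hölder), so that $-\epsilon \xi \cdot y^* = n \epsilon^q x^*$ and the optimal comparator value is $x^* + n\epsilon^q x^* = (1+n\epsilon^q)^{1/q} =: V^*$. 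Substituting into \eqref{eq:lastday} rewrites the per-round regret as $\E_{\ell_t^\xi} \frac{R_T}{T} \geq V^* - \bar{x} + \epsilon \xi \cdot \bar{y}$.

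The crux is then to lower bound $-\bar{x} + \epsilon \xi \cdot \bar{y}$. I would split $\epsilon \xi \cdot \bar{y} = \epsilon \sum_{\text{wrong}} \xi(i) \bar{y}(i) + \epsilon \sum_{\text{right}} \xi(i) \bar{y}(i)$. By the definition of a wrong coordinate the first sum is $\geq 0$ and may simply be dropped; the second sum is at least $-\epsilon \sum_{\text{right}} |\bar{y}(i)| \geq -\epsilon (n-s)^{1/q} \|\bar{y}\|_p$ by Hölder over the $n-s$ right coordinates. Since $(\bar{x}, \bar{y})$ is an average of points of $\cK$ it lies in $\cK$, so (assuming WLOG $\bar{x} \geq 0$) $\bar{x}^p + \|\bar{y}\|_p^p \leq 1$; maximizing $u + \epsilon (n-s)^{1/q} v$ over $u^p + v^p \leq 1$ by Hölder again gives the dual-norm value $(1 + (n-s)\epsilon^q)^{1/q}$. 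Hence $-\bar{x} + \epsilon \xi \cdot \bar{y} \geq -(1 + (n-s)\epsilon^q)^{1/q}$, and therefore
$$\E_{\ell_t^\xi} \frac{R_T}{T} \geq (1 + n\epsilon^q)^{1/q} - (1 + (n-s)\epsilon^q)^{1/q}.$$

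To finish I would write this as $\phi(n) - \phi(n-s)$ with $\phi(t) = (1 + t\epsilon^q)^{1/q}$ and apply the mean value theorem: $\phi'(t) = \frac{\epsilon^q}{q}(1 + t\epsilon^q)^{-1/p} \geq \frac{\epsilon^q}{q}(1 + n\epsilon^q)^{-1/p}$. Using $p \geq 2$ (so $q \leq 2$ and $2^{-1/p} \geq 2^{-1/2}$) together with $n\epsilon^q \leq 1$ (guaranteed since $\epsilon^q n = Cn/\sqrt{T} < 1$ in the regime $T \geq n^2$) bounds $\phi' \geq \frac{\epsilon^q}{4}$, yielding $\phi(n) - \phi(n-s) \geq \frac{s\epsilon^q}{4}$ and hence $\E_{\ell_t^\xi} R_T \geq \epsilon^q s T / 4$.

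The step I expect to be the main obstacle — and the one place where the argument could silently go wrong — is recognizing that the Hölder bound on $\epsilon \xi \cdot \bar{y}$ must run over only the $n-s$ right coordinates, not all $n$: the wrong coordinates are ``free'' precisely because they contribute a nonnegative amount to $\epsilon \xi \cdot \bar{y}$. If one instead bounded the whole inner product by $-\epsilon n^{1/q}\|\bar{y}\|_p$, the resulting estimate would be vacuous, since $\epsilon n^{1/q}$ can exceed $n\epsilon^q$ and the competitor's apparent $y$-gain would swamp the $\Omega(1-\bar{x})$ loss on the first coordinate. The coupling $\bar{x}^p + \|\bar{y}\|_p^p \leq 1$ (feasibility of the average play) is what ties the $x$-budget to the achievable $y$-reward and makes the dual-norm computation collapse to the clean difference $\phi(n) - \phi(n-s)$; the remaining constant bookkeeping relies only on $p \geq 2$ and $n\epsilon^q \leq 1$.
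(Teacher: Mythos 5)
Your proof is correct and follows essentially the same route as the paper's: identify the comparator value $-(1+\epsilon^q n)^{1/q}$, discard the wrong coordinates (which contribute nonnegatively to $\epsilon\xi\cdot\bar y$), apply H\"older to the remaining coordinates together with $\bar x$ to get the bound $(1+\epsilon^q(n-s))^{1/q}$, and finish with the elementary estimate $(1+\epsilon^q n)^{1/q}-(1+\epsilon^q(n-s))^{1/q}\geq \epsilon^q s/4$ using $q\leq 2$ and $\epsilon^q n\leq 1$. The only cosmetic difference is that the paper applies H\"older once to the concatenated vector $(\bar x,\bar y(s+1),\dots,\bar y(n))$ rather than in two steps.
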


\begin{proof}
Let us assume that the first $s$ coordinates are wrong. A straightforward calculation shows that $-x^* + \epsilon \xi \cdot y^* =  - (1 + \epsilon^q n)^{1/q} $, and thus by \eqref{eq:lastday} it suffices to show that
$$-\bar{x} + \epsilon \sum_{i = s+1}^n \bar{y}(i) \xi(i) \geq \epsilon^q s / 4 - (1 + \epsilon^q n)^{1/q} ~.$$
 Since $\|(\bar{x}, \bar{y}({s + 1}), \cdots, \bar{y}(n))\|_p \leq 1$, by Holder's inequality we know that 
 $$\bar{x} - \epsilon\sum_{i = s + 1}^n \bar{y}(i) \xi(i) \leq (1 + \epsilon^q (n - s))^{1/q} ~.$$
This concludes the proof since $(1 + \epsilon^q (n - s))^{1/q} \leq (1 + \epsilon^q n)^{1/q} - \frac{1}{2q} \epsilon^q  s$.
\end{proof}

\begin{lemma} \label{step2} 
$\bar{x} \leq 1- 4 \epsilon^q n \Rightarrow \E_{\ell_t^{\xi}} R_T \geq \epsilon^q n T$. 
\end{lemma}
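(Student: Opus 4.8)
The plan is to start from the exact per-round regret identity and then minimize over the only quantity I do not control, namely the average played action. Since the Gaussian losses $\ell_t^\xi$ are independent of the past-measurable actions, \eqref{eq:lastday} together with the optimal value $-x^\ast + \epsilon \xi\cdot y^\ast = -(1+\epsilon^q n)^{1/q}$ already computed in the proof of Lemma \ref{step1} gives
$\E_{\ell_t^\xi} R_T / T \geq (1+\epsilon^q n)^{1/q} - \bar x + \epsilon\,\xi\cdot\bar y$.
The vector $\bar y$ is determined by the algorithm, so for a lower bound I bound it in the worst case: as $(\bar x,\bar y)$ is a convex average of points of $\cK$ it lies in $\cK$, hence $\|\bar y\|_p^p \leq 1 - |\bar x|^p$, and Hölder's inequality with $\|\xi\|_q = n^{1/q}$ yields $\epsilon\,\xi\cdot\bar y \geq -\epsilon\, n^{1/q}(1-\bar x^p)^{1/p}$.

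Next I would reduce to a one-variable calculus problem with conveniently collapsing exponents. Restricting to the main case $\bar x \in [0,\,1-4\epsilon^q n]$ (the case $\bar x<0$ forces a trivially large regret and is dispatched separately), I bound $(1+\epsilon^q n)^{1/q}\geq 1$ and use the elementary inequality $1-\bar x^p \leq p(1-\bar x)$ valid on $[0,1]$, obtaining $\E_{\ell_t^\xi} R_T / T \geq \psi(1-\bar x)$ where $\psi(u) := u - \epsilon\, n^{1/q} p^{1/p} u^{1/p}$. The key point of this substitution is that the exponents now combine cleanly through $1/p+1/q=1$ and $q/p = q-1$: a short computation of $\psi'$ shows $\psi$ is increasing for $u \geq \epsilon^q n/p$, and since $4\epsilon^q n > \epsilon^q n/p$ the minimum over the relevant range is attained at the endpoint $u = 4\epsilon^q n$.

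Finally, evaluating at that endpoint gives exactly $\psi(4\epsilon^q n) = \epsilon^q n\,\bigl(4 - (4p)^{1/p}\bigr)$, so the whole statement reduces to the numerical inequality $(4p)^{1/p}\leq 3$ for $p\geq 2$; this holds because $(4p)^{1/p}=\sqrt{8}\approx 2.83$ at $p=2$ and is decreasing on $[2,\infty)$, its logarithmic derivative being $p^{-2}(1-\log(4p))<0$. I expect the main (and essentially only) obstacle to be bookkeeping: arranging the chain of inequalities so that the powers of $\epsilon$ and $n$ cancel exactly into the target $\epsilon^q n$, and verifying that the constant $4$ in the hypothesis is precisely what makes the endpoint evaluation land at a factor $4-(4p)^{1/p}\geq 1$. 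Everything else—the Hölder step, the monotonicity of $\psi$, and the elementary bound on $1-\bar x^p$—is routine.
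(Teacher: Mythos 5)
Your proof is correct and follows essentially the same route as the paper's: both start from \eqref{eq:lastday} with the optimal value $-(1+\epsilon^q n)^{1/q}$ computed in Lemma \ref{step1}, apply H\"older's inequality to reduce to bounding $-\bar x - \epsilon n^{1/q}(1-|\bar x|^p)^{1/p}$, and then argue by monotonicity of a one-variable function that the worst case is at $\bar x = 1-4\epsilon^q n$. Your early substitution $1-\bar x^p \le p(1-\bar x)$ just repackages the paper's concluding numerical inequality into the equivalent form $(4p)^{1/p}\le 3$, and all the exponent bookkeeping checks out.
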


\begin{proof}
It suffices to show that $-\bar{x} + \epsilon \xi \cdot \bar{y} \geq \epsilon^q n - (1 + \epsilon^q n)^{1/q}$ (see beginning of previous proof). Observe that
$$-\bar{x} + \epsilon \xi \cdot \bar{y} \geq -|\bar{x}| - \epsilon \|\xi\|_q \|\bar{y}\|_p \geq - |\bar{x}| - (1-|\bar{x}|^p)^{1/p} \epsilon n^{1/q} ~.$$
Observe that $x \mapsto x + (1 - x^p)^{1/p} \epsilon n^{1/q}$ is a nondecreasing function for $x \in [0, 1 - \epsilon^q n]$ since
$$\frac{1}{p} \epsilon n^{1/q} (1-(1- \epsilon^q n)^p)^{1/p-1} \leq \epsilon n^{1/q} (\epsilon^q n)^{1/p - 1} = 1 ~.$$
Therefore we have
$$-\bar{x} + \epsilon \xi \cdot \bar{y}  \geq - (1 - 4 \epsilon^q n) - (1-(1 -4 \epsilon^q n)^p)^{1/p} \epsilon n^{1/q} ~, $$
and thus the proof is concluded by $1 + (1-(1 -4 \epsilon^q n)^p)^{1/p} (\epsilon^q n)^{1/q} \leq (1 + \epsilon^q n)^{1/q} + 3 \epsilon^q n$.
\end{proof}

Observe now that the observed feedback at round $t$ is exactly
$$f_t^{\xi} := x_t w_t + y_t \cdot z_t^{\xi} \sim \cN(x_t + \epsilon y_t \cdot \xi, \sigma_t^2), \; \text{where} \; \sigma_t^2 = x_t^2 + \|y_t\|_2^2 / n^{2/q} ~.$$
Denote $\cL_{\xi}$ for the law of the observed feedback up to time $T$, i.e., the law of $(f_1^{\xi}, \hdots, f_T^{\xi})$. Standard calculations show that for $\xi$ and $\xi'$ differing only in coordinate $i \in [n]$ one has
$$\mathrm{TV}(\cL(\xi), \cL(\xi')) \leq \sqrt{\sum_{t=1}^T \E_{\ell_t^{\xi}} \frac{\epsilon^2 y_t(i)^2}{\sigma_t^2}} ~.$$
Another standard calculation show that the above inequality implies
$$\E_{\xi, \ell_t^{\xi}} \frac{1}{T}\sum_{t=1}^T \sum_{i=1}^n \ds1\{y_t(i) \xi(i) < 0\} \geq \frac{n}{2} - \sqrt{n \sum_{t=1}^T \E_{\xi, \ell_t^{\xi}} \frac{\epsilon^2 \|y_t\|_2^2}{\sigma_t^2}} ~.$$
Note that the left hand side in the above inequality is exactly the average (over time) number of wrongly guessed coordinates for $\xi$, which we know controls the regret thanks to Lemma \ref{step1}. In particular it only remains to show that 
\begin{equation} \label{eq:X}
\sum_{t=1}^T \E_{\xi, \ell_t^{\xi}} \frac{\epsilon^2 \|y_t\|_2^2}{\sigma_t^2} \leq c n ~,
\end{equation}
for some universal constant $c<1/2$.
\newline

Note that one always has $\sigma_t^2 \geq \|y_t\|_2^2/ n^{2/q}$ and furthermore $x_t \geq 1/4 \Rightarrow \sigma_t^2 \geq 1/2^4$. Recall also that $\|y_t\|_2 \leq n^{1/2-1/p} \|y_t\|_p \leq n^{1/2-1/p} (1-|x_t|^p)^{1/p}$. Thus 
\begin{equation} \label{eq:Y}
\E \sum_{t=1}^T \frac{\epsilon^2 \|y_t\|_2^2}{\sigma_t^2} \leq n^{2/q} \epsilon^2 \E \ \sum_{t=1}^T \ds1\{x_t \leq 1/4\} + 2^4 \epsilon^2 n^{1-2/p} \sum_{t : x_t \geq 1/4} \E (1-|x_t|^p)^{2/p} ~.
\end{equation}
Observe that one clearly has $\E R_T = \Omega(\E \sum_{t=1}^T \ds1\{x_t \leq 1/4\} )$ and thus without loss of generality we can assume $\E \sum_{t=1}^T \ds1\{x_t \leq 1/4\}  = O(n \sqrt{T})$, which means that the first term on the right hand side in \eqref{eq:Y} is smaller than $n^{1+2/q} \epsilon^2 \sqrt{T} = C^{2/q} n^{1+2/q} T^{1/2-1/q}$. This is smaller than $n$ for $T \geq n^{\frac{2}{1-q/2}}$ and $C$ small enough. For the second term we use that
\begin{eqnarray*}
\sum_{t : x_t \geq 1/4} \E (1-|x_t|^p)^{2/p}  & \leq & p^2 \sum_{t = 1}^T \E (1-|x_t|)^{2/p} \\
& \leq & p^2 T\left( \E \left(1- \frac1{T} \sum_{t=1}^T |x_t|\right) \right)^{2/p} ~,
\end{eqnarray*}
and because of Lemma \ref{step2} one can assume $\frac1{T} \E[\sum_{t=1}^T |x_t| ]\geq 1 - 4 \epsilon^q n$ which means that the second term in \eqref{eq:Y} is smaller than $\epsilon^2 n^{1-2/p} T (\epsilon^q n)^{2/p}= \epsilon^{2 q} n T = C^2 n$. This concludes the proof of \eqref{eq:X}, and thus also concludes the proof of Theorem \ref{th:LBellp}.

\subsection{Proof of Theorem \ref{th:starved}}
We only give a brief proof sketch. The starved multi-armed bandit lower bound is standard and can be written succintly as follows. Consider random losses, where say action $1$'s loss is a Bernoulli of parameter $1/2$ plus or minus $\epsilon$, action $2$ is a Bernoulli of parameter $1/2$, and all the other actions always give a loss of $1$. Denote by $E$ the expected number of exploration rounds, i.e. rounds where the player plays from $\mu$. It is a standard calculation that if $E/n \leq c / \epsilon^2$ for some sufficiently small constant $c$, then the regret is at least $\epsilon T$. On the other hand the regret is always larger than $\frac{n-2}{n} E /2$. Thus by setting $\epsilon^2 = c n / E$ we have a regret lower bounded by (up to constant), with $a$ such that $a= (1-a) \frac{1}{2}$ (i.e., $a=1/3$):
$$\max\left(E, \left(\frac{n}{E}\right)^{1/2} T \right) \geq n^a T^{1-a} ~.$$
Essentially the same argument applies to the $\ell_1^n$ ball, we omit the details. We now turn to the case of $\ell_p^n$ balls with $p>2$.

We see from \eqref{eq:X} (observe that in the starved setting the sum over all $t \in [T]$ in this equation is replaced by the sum over rounds $t$ where one plays from $\mu$) that if $n^{2/q} \epsilon^2 E \leq c n$ for some sufficiently small constant $c$, then the regret is at least $\epsilon^q n T$ (per Lemma \ref{step1}). Moreover the regret is also always larger than $E$. Thus by setting $\epsilon^2 = c n^{1-2/q} / E$ (i.e., $\epsilon^q n = C (n / E)^{q/2}$) we have a regret lower bounded by (up to a constant), with $a$ such that $a = (1-a) q/2$,
$$\max\left(E, \left(\frac{n}{E}\right)^{q/2} T \right) \geq n^a T^{1-a} ~,$$
which concludes the proof.

\bibliographystyle{plainnat}
\bibliography{newbib}
\end{document}